\documentclass[11pt,a4paper]{article}
\usepackage[margin=25mm]{geometry}
\usepackage[T1]{fontenc}
\usepackage{lmodern}
\usepackage[authoryear,round]{natbib}
\usepackage{amsmath,amssymb,amsthm,booktabs,microtype}
\usepackage{graphicx,xcolor,adjustbox}
\usepackage[section]{placeins}
\usepackage{xurl}
\usepackage[hidelinks]{hyperref}
\hypersetup{
 pdftitle={Exact finite attention responses from RoPE derivatives},
 pdfauthor={Julie Huang; Maggie Chlon; Gregory Gutin; Leon Chlon},
 pdfsubject={Exact finite positional and joint key-value responses in softmax attention}}
\newtheorem{theorem}{Theorem}
\newtheorem{proposition}[theorem]{Proposition}
\newtheorem{corollary}[theorem]{Corollary}
\newcommand{\R}{\mathbb{R}}
\newcommand{\T}{^{\!\top}}
\newcommand{\Iset}{\mathcal{I}}
\newcommand{\DM}{\Delta M}
\newcommand{\norm}[1]{\left\lVert #1\right\rVert}
\newcommand{\dd}{\mathrm{d}}
\newcommand{\ph}{\varphi_1}
\title{Exact finite attention responses\\from RoPE derivatives}
\author{%
Julie Huang\textsuperscript{1}\quad
Maggie Chlon\textsuperscript{1}\quad
Gregory Gutin\textsuperscript{2}\quad
Leon Chlon\textsuperscript{1,3}\\[0.5em]
{\normalfont\small\textsuperscript{1}Hassana Labs}\\
{\normalfont\small\textsuperscript{2}Department of Computer Science, Royal Holloway, University of London}\\
{\normalfont\small\textsuperscript{3}University of Oxford}}
\date{}
\begin{document}
\maketitle

\begin{abstract}
We derive exact local responses for attention interventions, allowing
candidate edits to be scored from a cached baseline and one backward pass.
The starting point is the RoPE derivative $\partial_p z(p)=Az(p)$: its
integral gives the finite positional displacement, which we carry through
softmax without linearising either rotation or normalisation. The resulting
predictions achieve 95.36--96.52\% sign accuracy across 92,160 executed
positional edits on 768 held-out prompt sets, reducing answer-margin MAE
by 73.6--82.5\% against the positional Jacobian and 36.2--50.9\% against zero.
For simultaneous key and value edits, the same divided-difference calculus
isolates $C_{KV}=\sum_j(p'_j-p_j)\varepsilon_j$, the interaction omitted by
adding separate attributions. Retaining it reduces downstream margin MAE
by more than a factor of nine in every setting of a 5,120-intervention
sweep across two Qwen sizes, two tasks, and multiple layers; reductions
against a quadratic interaction correction are 75.9--98.5\%.
Exactness concerns the edited attention write; downstream predictions
contract that response with a baseline gradient and are evaluated by
native execution. The calculus also yields a KL certificate for local
approximation error, an exact query-conditioned gradient-step
representation whose curvature identifies attention-preserving query
directions, and minimum-norm query control. Sparse evaluation supports
candidate ranking and cache decisions with explicit local distortion criteria.
\end{abstract}

\section{Introduction}
\label{main:introduction}
Attention interventions ask how a model's answer would change if selected
tokens were read differently. Executing every candidate edit can be
expensive; attribution patching instead scores candidates with a shared
baseline gradient \citep{syed2024,kramar2024}. The quality of this ranking
depends on the local response supplied to that gradient. For finite
positional and joint cache edits, we can compute that response exactly.

We provide a reusable calculus for \emph{constructing, scoring, and
checking finite attention interventions}. Its finite response quantifies
the interaction missed by separate key/value attribution and the local
distortion caused by cache edits. Its KL certificate bounds approximation
error and can certify local score orderings. After fixing the common-logit
gauge, the positive curvature of its gradient-step representation
identifies attention-preserving query directions and supports feasible
minimum-norm control. These are checkable operations for attribution,
attention steering, and cache management. Their elementary form makes
them directly reusable in other methods (Figure~\ref{fig:workflow}).

RoPE gives this problem an explicit geometry. Holding content fixed,
position evolves under the generator $\partial_p z(p)=Az(p)$. A finite
shift rotates the key and then changes the normalised attention weights;
a positional Jacobian approximates both operations. Its accuracy depends
on angular displacement in each rotary plane. A simultaneous value edit
adds another effect: reallocating attention changes how strongly the edited
value is read. Adding separate key and value attributions omits precisely
this interaction.

We start by integrating the RoPE derivative:
\begin{equation}
 \boxed{\quad
 \partial_p z(p)=Az(p),\qquad
 z(p+\delta)-z(p)=\delta A\ph(\delta A)z(p).
 \quad}
 \label{main:central}
\end{equation}
Here $\ph(B)=\int_0^1 e^{tB}\,\dd t$ is the classical exponential divided
difference \citep{hochbruck2010}. Applying its scalar counterpart to
softmax retains the complete change in normalisation. This gives an
exact finite attention response and separates the rotary and softmax
errors made by a positional Jacobian. General key and value edits use the
same scalar calculus once their score and value increments are specified.

We focus on positional and joint-KV edits because they test complementary
mechanisms: finite rotary displacement and the coupling between attention
allocation and value content. The executed studies ask whether retaining
these local effects improves predictions beyond the edited head.
Across 92,160 executed positional edits on 768 held-out prompt sets,
predicted answer-margin changes have 95.36--96.52\% sign accuracy and
73.6--82.5\% lower MAE than the positional Jacobian. Across six
joint-cache settings, retaining $C_{KV}$ lowers downstream MAE by more
than a factor of nine against separate attribution, with every
setting-wise paired 95\% interval favouring the joint response.
The predictors share a baseline downstream gradient and are assessed
against independently executed continuations. Frequency and strength
sweeps locate both the small-edit regime where approximations agree and
the finite edits where the correction changes prediction quality.

\begin{figure}[t]
\centering
\includegraphics[width=\linewidth]{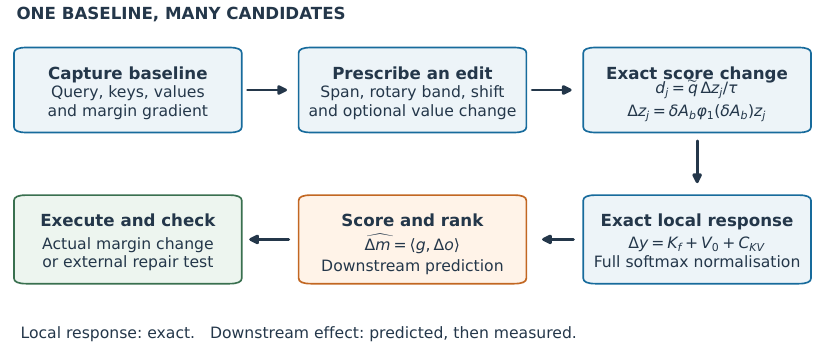}
\caption{\textbf{Construct, score, and check an attention intervention.}
The positional generator specifies a finite key displacement; exact
softmax normalisation gives its local response, including any simultaneous
value change. One baseline gradient supplies downstream scores for many
candidates. Executed continuations measure the selected edits' answer
changes; a task check evaluates a proposed repair.}
\label{fig:workflow}
\end{figure}

\section{From the RoPE derivative to a finite attention response}
\label{main:calculus}
\subsection{Integrating the positional generator}
For fixed, unscaled RoPE frequencies, write
\begin{equation}
 R(p)=e^{pA},\qquad
 A=\operatorname{diag}_{r=1}^{d/2}
 \begin{pmatrix}0&-\omega_r\\ \omega_r&0\end{pmatrix},
 \qquad z(p)=R(p)z_0 .
 \label{main:rotation}
\end{equation}
Position is extended continuously for differentiation; integer positions
are the usual evaluation points. Holding $z_0$ fixed gives
$\partial_p^n z(p)=A^nz(p)$ for every $n\ge1$. The fundamental theorem of
calculus now gives
\begin{equation}
 \begin{aligned}
 z(p+\delta)-z(p)
 &=\delta\int_0^1 Ae^{t\delta A}z(p)\,\dd t\\
 &=\delta A\ph(\delta A)z(p),\qquad
 \ph(B)=\sum_{n=0}^{\infty}\frac{B^n}{(n+1)!}.
 \end{aligned}
 \label{main:integrate}
\end{equation}
This entire function covers zero-frequency blocks without an inverse of
$A$. It also makes the approximation explicit:
$\delta Az(p)$ is the first-order term, while $\ph(\delta A)$ retains
all higher orders. On plane $r$, the relevant scale is
$\theta_r=\delta\omega_r$. The exact rotary displacement is bounded by
$2\norm{z_r}_2$, whereas the tangent has norm
$|\theta_r|\norm{z_r}_2$. A token shift can therefore be small in one band
and far outside the tangent regime in another
(Figure~\ref{fig:ropegeometry}, Appendix~\ref{sec:positionaloperators}).

\subsection{Propagating the finite shift through softmax}
Fix a deterministic attention head, its query, and a nonempty attended set.
Using row vectors, let $\widetilde q=qR(p_i)\T$,
$\widetilde k_j=k_jR(p_j)\T$, $\tau=\sqrt d$,
$s_j=\widetilde q\widetilde k_j\T/\tau$,
$p_j=e^{s_j}/\sum_k e^{s_k}$, and $y=\sum_jp_jv_j$.
Let $A_b$ retain the selected rotary blocks and be zero elsewhere.
Equation~\eqref{main:integrate} supplies the score increment for a key shifted
by $\delta$ in that band:
\begin{equation}
 \widetilde k'_j=\widetilde k_j e^{\delta A_b\T},
 \qquad
 d_j=s'_j-s_j
 =\frac{\widetilde q\,\delta A_b\ph(\delta A_b)
                         \widetilde k_j\T}{\tau}.
 \label{main:score}
\end{equation}
Unedited keys have $d_j=0$. The notation $d_j$ denotes a score increment;
$\delta$ denotes positional displacement. Define
\begin{equation}
 D=\sum_jp_je^{d_j},\qquad t_j=d_j-\log D,\qquad
 p'_j=p_je^{t_j},\qquad p'_j-p_j=p_jt_j\ph(t_j).
 \label{main:normalise}
\end{equation}
The scalar identity $e^t-1=t\ph(t)$ therefore retains the exact
softmax renormalisation.

\begin{samepage}
\begin{theorem}[Exact local response to finite edits]
\label{main:finite}
For the fixed query and attended set above, let $v'_j=v_j+\varepsilon_j$.
For any finite score increments $d_j$,
\begin{equation}
 \boxed{\quad
 \Delta y=y'-y
 =\sum_j p_jt_j\ph(t_j)(v_j-y)
       +\sum_jp'_j\varepsilon_j .
 \quad}
 \label{main:response}
\end{equation}
For a positional intervention, inserting \eqref{main:score} gives the exact
response of the full RoPE--softmax head.
\end{theorem}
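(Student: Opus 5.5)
The plan is to write $y'-y$ as an attention-reallocation term plus a value-change term, and then recognise the reallocation term through the normalisation identities in \eqref{main:normalise}.

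\textbf{Step 1: the edited weights.} With $s'_j=s_j+d_j$ we have
\[
p'_j=\frac{e^{s_j+d_j}}{\sum_k e^{s_k+d_k}}=\frac{p_je^{d_j}}{\sum_kp_ke^{d_k}}=\frac{p_je^{d_j}}{D}=p_je^{t_j},
\]
because $t_j=d_j-\log D$. Here $D>0$, since it is a positive combination of exponentials over a nonempty set, so $\log D$ is defined. The identities $\sum_jp_j=\sum_jp'_j=1$ follow immediately.

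\textbf{Step 2: split the output change.} With $v'_j=v_j+\varepsilon_j$ we have $y'=\sum_jp'_jv_j+\sum_jp'_j\varepsilon_j$. Hence
\[
\Delta y=\sum_j(p'_j-p_j)v_j+\sum_jp'_j\varepsilon_j.
\]
Since $\sum_j(p'_j-p_j)=0$, we may subtract $y\sum_j(p'_j-p_j)$ from the first sum without changing its value. This turns it into $\sum_j(p'_j-p_j)(v_j-y)$.

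\textbf{Step 3: insert the divided difference.} The scalar identity $e^t-1=t\ph(t)$ holds for every real $t$, because $\ph$ is entire. It gives $p'_j-p_j=p_j(e^{t_j}-1)=p_jt_j\ph(t_j)$, which yields \eqref{main:response} exactly. No step is approximate, so the formula holds for arbitrary finite $d_j$. Unedited keys with $d_j=0$ still acquire $t_j=-\log D$, and this is how the renormalisation is carried along.

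\textbf{Step 4: the positional case.} For the positional statement, I substitute the score increment \eqref{main:score}. That increment is itself exact: by \eqref{main:integrate} applied with the generator $A_b$, we have $e^{\delta A_b}-I=\delta A_b\ph(\delta A_b)$, and the row-vector transposition gives $\widetilde q(\widetilde k'_j-\widetilde k_j)\T/\tau=d_j$.

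\textbf{Main obstacle.} The only point needing care is bookkeeping in this transposition. We need $(e^{\delta A_b\T})\T=e^{\delta A_b}$, and we need $A_b$ to commute with $\ph(\delta A_b)$, which holds because $\ph(\delta A_b)$ is a power series in $A_b$. Both facts are immediate, so the theorem reduces entirely to the elementary softmax and exponential identities above.
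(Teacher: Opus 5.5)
Your proposal is correct and follows the paper's proof. The paper likewise writes $\Delta y=\sum_j(p'_j-p_j)v_j+\sum_jp'_j\varepsilon_j$, centres the first sum using $\sum_j(p'_j-p_j)=0$, and substitutes $p'_j-p_j=p_jt_j\varphi_1(t_j)$; your checks that $p'_j=p_je^{t_j}$ is the softmax of the edited scores and that \eqref{main:score} is exact just make explicit what the paper takes from its definitions.
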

\begin{proof}
Subtract the two readouts and write
$\Delta y=\sum_j(p'_j-p_j)v_j+\sum_jp'_j\varepsilon_j$.
Since the probability changes sum to zero, replace $v_j$ by $v_j-y$ in
the first sum and use \eqref{main:normalise}.
\end{proof}
\end{samepage}
This completes the chain from the derivative to a finite attention write.
The fixed-content condition matters: rotating stored keys at one readout
does not recompute the contextual features that a reordered prompt would
produce upstream. The latter can be separated by a four-cell
phase/context decomposition (Appendix~\ref{sec:positionalapplications}).

\subsection{The tangent limit and coupling between bands}
For key-only edits, the positional Jacobian substitutes
$d_j^{\rm lin}=\widetilde q\,\delta A_b\widetilde k_j\T/\tau$ and predicts
$\sum_jp_jd_j^{\rm lin}(v_j-y)$. Its local discrepancy has two explicit parts:
\begin{equation}
 \Delta y-\Delta y_{\rm Jac}
 =\underbrace{\sum_jp_jt_j\bigl(\ph(t_j)-1\bigr)(v_j-y)}_{\text{softmax remainder}}
 +\underbrace{\sum_jp_j(d_j-d_j^{\rm lin})(v_j-y)}_{\text{rotary remainder}} .
 \label{main:tworemainders}
\end{equation}
The rotary and softmax corrections are separately identifiable; both
vanish in the appropriate small-displacement limit.

Disjoint rotary bands have additive score increments,
$d^{b\cup c}=d^b+d^c$, because their generators have disjoint block support.
Their attention responses need not add. With
$F(s)=\sum_jp_j(s)v_j$, their interaction is
$F(s+d^b+d^c)-F(s+d^b)-F(s+d^c)+F(s)$.
It is the double integral of $D^2F$ over the two score directions, with
\[
 D^2F(s)[a,b]
 =\mathbb E_p[(a-\mathbb E_pa)(b-\mathbb E_pb)(v-F(s))].
\]
The coupling is therefore attributable to softmax normalisation even when
the rotary score components are independent
(Appendix~\ref{sec:finitechanges}).

\section{Joint responses, error certificates, and effective weights}
\label{main:structure}
\subsection{The interaction omitted by separate attribution}
The positional calculation specifies one important class of $d_j$.
Theorem~\ref{main:finite} also permits arbitrary prescribed key and value
edits. Decomposing its second sum gives
\begin{equation}
 \begin{gathered}
 K_f=\sum_j(p'_j-p_j)(v_j-y),\qquad V_0=\sum_jp_j\varepsilon_j,\\
 C_{KV}=\sum_j(p'_j-p_j)\varepsilon_j,\qquad
 \Delta y=K_f+V_0+C_{KV}.
 \end{gathered}
 \label{main:joint}
\end{equation}
Adding separate finite-key and baseline-weighted value responses omits
exactly $C_{KV}$. It measures how a value edit's effect changes when
attention is reallocated. Its four-cell form is
$y(p',v+\varepsilon)-y(p',v)-y(p,v+\varepsilon)+y(p,v)$.
Along a joint edit path of strength $\alpha$, its leading term is
\[
 C_{KV}(\alpha)=\alpha^2
 \sum_jp_j(d_j-\mathbb E_pd)\varepsilon_j+O(\alpha^3).
\]
The quadratic comparator in our experiments keeps this leading interaction
and the exact key response; the finite formula retains the whole interaction.

\subsection{A certificate for the local approximation}
Let $L_s=\sum_jp_jd_j(v_j-y)$ and
$w_j=p_j(e^{t_j}-1-t_j)$. Exponential convexity gives $w_j\ge0$, and
normalisation yields
\begin{equation}
 \sum_jw_j=-\sum_jp_jt_j=\operatorname{KL}(p\Vert p'),
 \qquad K_f-L_s=\sum_jw_j(v_j-y).
 \label{main:kl}
\end{equation}
For a fixed gradient $g$ after output projection $W_o$, put
$a_j=\langle g,(v_j-y)W_o\rangle$ and
$b_j=\langle g,\varepsilon_jW_o\rangle$. Then
\begin{equation}
 \begin{split}
 |\langle g,[\Delta y-(L_s+V_0)]W_o\rangle|
 \le{}&\operatorname{KL}(p\Vert p')\max_j|a_j|\\
 &+\operatorname{TV}(p,p')(\max_jb_j-\min_jb_j).
 \end{split}
 \label{main:certificate}
\end{equation}
The first term bounds the softmax remainder; the second bounds the joint
interaction. A positional-Jacobian certificate additionally accounts for
the rotary remainder in \eqref{main:tworemainders}. These are bounds on
local gradient-contracted errors, not on the network's remaining
nonlinearity. When two first-order scores differ by more than the sum of
their bounds, their exact local ordering is certified. The proof and
sparse formulas appear in Appendix~\ref{sec:finitechanges}.

\subsection{An exact gradient step in input coordinates}
Return to affine projections of an input $x_i$, and augment it as
$u_i=(x_i\T,1)\T$. RoPE gives bilinear scores
$s_{ij}=u_i\T a_{ij}$, where
$a_{ij}=\widehat W_qR(p_i)\T\widetilde k_j\T/\tau$.
Let $\mu_i$ be the uniform mean of the attended values. Applying
$e^s-1=s\ph(s)$ to the centred readout gives
\begin{equation}
 \begin{aligned}
 c_{ij}&=\ph(s_{ij})/Z_i>0,\qquad
 \DM_i=\sum_jc_{ij}a_{ij}(v_j-\mu_i),\\
 y_i&=\mu_i+u_i\T\DM_i,\qquad
 \DM_i=-\nabla_B\mathcal E_i(0),\\
 \mathcal E_i(B)&=\tfrac12\sum_jc_{ij}
       \norm{a_{ij}\T B-(v_j-\mu_i)}_2^2.
 \end{aligned}
 \label{main:effective}
\end{equation}
For every fixed head, bank, and query, this quadratic supplies the effective
matrix by an exact unit gradient step from zero. Its features, centred
values, and positive coefficients are computed directly from those inputs
and held fixed during differentiation with respect to $B$. At a fixed bank
and position, the cache supplies outer-product directions and the query
selects their coefficients; $\operatorname{rank}(\DM_i)\le d$.

The positive curvature has an operational consequence. Subtract one
reference key feature from every score feature, which leaves softmax
unchanged. Let $\Gamma_i$ be the quadratic's feature curvature restricted
to the $m$ editable input coordinates, and let $H_{\rm all}$ stack the
corresponding relative-score features. Positivity gives
\begin{equation}
 \ker\Gamma_i=\ker H_{\rm all}
 =\{h:p(x_i+h)=p(x_i)\}.
 \label{main:curvature}
\end{equation}
This holds for finite query edits with the bank and position fixed;
the kernel is independent of the query and reference key. It connects
the quadratic directly to the feasible-control problem below
(Appendix~\ref{app:curvature}). The effective matrix itself remains a
query-dependent secant: reusing an anchor $\DM_a$ omits exactly
$u_i\T(\DM_i-\DM_a)$, as characterised in
Appendix~\ref{sec:effectivemaps}.

\section{Constructing and ranking interventions}
\label{main:instrument}
The calculus supports two questions: which query edit realises a requested
attention relationship, and which candidate edit best serves a specified
objective? Both use the same captured head and exact local response.

\paragraph{Construct a requested relationship.}
For token pairs $(j_r,k_r)$, request log attention ratios $\ell_r^*$.
Apply any prescribed rotary shift first, obtaining score features $a'_j$.
Let $H$ have rows $(\bar a'_{j_r}-\bar a'_{k_r})\T$, where
$\bar a'_j$ contains the first $m$ coordinates of $a'_j$, and put
$b_r=\ell_r^*-u\T(a'_{j_r}-a'_{k_r})$. An edit $h$ of the affine
query input then satisfies the requested ratios exactly when $Hh=b$.
If this system is feasible, its unique minimum-Euclidean-norm edit is
\begin{equation}
 h_* = H^\dagger b,\qquad HH^\dagger b=b.
 \label{main:querycontrol}
\end{equation}
Thus one can set $A{:}B=4{:}1$, preserve $C{:}D$, and compensate for a
specified positional shift in one solve. Feature differences make the
construction gauge invariant; with all key pairs included, $\ker H$
is exactly the set of query directions preserving the attention distribution.
Appendix~\ref{app:querycontrol} gives the proof and infeasibility criterion.

\paragraph{Evaluate the local response.}
For score edits supported on $S$ and value edits on $T$, let
$r_j=p_jd_j\ph(d_j)$ on $S$. The exact response is
\begin{equation}
 \Delta y=
 \frac{\displaystyle
 \sum_{j\in S}r_j(v_j-y)+\sum_{j\in T}p_j\varepsilon_j
                  +\sum_{j\in S\cap T}r_j\varepsilon_j}
 {\displaystyle 1+\sum_{j\in S}r_j}.
 \label{main:sparse}
\end{equation}
The baseline readout, probabilities, and candidate-group complement
statistics are shared. Each further evaluation uses only the edited
entries. Stable evaluation uses log probabilities, \texttt{expm1}, and
positive complement masses. Preparation and reuse are timed separately.

\paragraph{Rank candidates and execute selected edits.}
Choose a scalar answer margin $m$ and a candidate family. One baseline
backward pass supplies $g_\ell=\nabla_{o_\ell}m$ at the projected write
$o_\ell=\sum_hy_{\ell h}W_{o,\ell h}$. Score each candidate $c$ by
\begin{equation}
 \widehat{\Delta m}_c
 =\left\langle g_\ell,\sum_h\Delta y_{c,h}W_{o,\ell h}\right\rangle,
 \qquad
 \Delta m_c=m_c^{\rm executed}-m^{\rm baseline}.
 \label{main:scorecandidate}
\end{equation}
Sorting the predicted improvements gives an execution order without an
additional model call per candidate. Native execution measures the actual
margin change; an external task check determines whether a proposed answer
repair succeeds. The approximation in this prediction is propagation
through the remaining network with the frozen gradient.

\paragraph{Match the decision to its objective.}
Deleting a proper token group $G$, of mass $\alpha_G<1$ and weighted
value $w_G$, gives $\Delta y_{-G}=(\alpha_Gy-w_G)/(1-\alpha_G)$.
Its projected write norm can be constrained by a specified distortion
tolerance, giving a checkable current-query eviction criterion.
For precision restoration with an original cache available, let $e$ be
the current projected-write error and $\Delta_j o$ the exact change from
restoring entry $j$. The benefit
$B_j=\norm e_2^2-\norm{e+\Delta_j o}_2^2$ ranks restorations by their
exact reduction in local squared error. Re-scoring after each restoration
defines a greedy procedure for that objective; answer repair instead
uses the margin score and executed task check.

\section{Experiments}
\label{main:experiments}
We test the predictions against executed positional and joint-cache
interventions, then evaluate ranking, precision restoration, and local
evaluation cost. Native reconstruction independently checks implementation
agreement with the identities.

\subsection{Finite positional prediction and its frequency dependence}
The positional study uses Qwen2.5-1.5B-Instruct, FP32 eager attention,
and 768 held-out prompt sets: 256 each for corrupted category mapping
(Corrupt), ordered register assignment (Ordered), and SST-2 sentiment.
A shared policy is fixed on separate calibration streams. Each set has
five variants, three layers $\{0,14,21\}$, two shifts $\{16,128\}$,
and four band choices, giving 92,160 executed edits. Keys are rotated
at one readout; query, values, and mask are fixed there. Errors are
averaged within prompt sets, and 5,000 paired prompt-level bootstrap
resamples retain all interventions in their original clusters.
Appendix~\ref{sec:positionalapplications} supplies the full protocol.

\begin{table}[t]
\centering\small
\caption{\textbf{Prediction of executed positional effects.}
Downstream margin MAE on 256 held-out prompt sets per task.
Intervals are paired 95\% prompt-bootstrap intervals for the reduction
relative to zero. All predictors use the same baseline gradient.}
\label{main:phasetable}
\begin{tabular}{lrrrr}
\toprule
Task & Finite & Jacobian & Zero & Reduction vs.\ zero\\
\midrule
Corrupt & 0.02808 & 0.16028 & 0.05724 & 50.9\% [48.2, 53.7]\\
Ordered & 0.05786 & 0.21897 & 0.09073 & 36.2\% [35.0, 37.5]\\
SST-2 & 0.03585 & 0.15426 & 0.06681 & 46.3\% [43.1, 49.9]\\
\bottomrule
\end{tabular}
\end{table}

Finite responses reduce Jacobian MAE by 73.6--82.5\%, with sign
accuracies of 95.36--96.52\% (Table~\ref{main:phasetable}).
The zero baseline measures the magnitude of the actual effect, and
finite prediction improves on it in all three tasks.

The angular scales explain where the correction matters. At shift 16,
maximum fast-, middle-, and slow-band angles are 16, 0.1386, and
0.001489 radians. In the slow band, both predictors reach MAEs of
$4$--$6\times10^{-6}$ at native precision. In the middle band, the
Jacobian already removes 94.0--94.8\% of zero-predictor error; finite
MAEs of $4.0$--$6.4\times10^{-5}$ further reduce the small remaining
error. Large fast-band shifts are deliberately outside the tangent
regime. At middle-band shift 128, finite MAE is 0.002112--0.003543
against Jacobian MAE 0.016428--0.027925, for effects of mean magnitude
0.027545--0.038921. Full strata are retained in
Appendix~\ref{app:bandstrata}. The angular strata identify where
finite corrections have a material absolute effect.

The predicted structural distinctions are also measurable. Fast--middle
interaction contributes 3.33--9.20\% of joint local-response norm at
shift 16 and 16.20--26.94\% at shift 128. Under actual prompt
reordering, direct phase accounts for essentially all of the layer-0
response, whereas changed contextual features have signed projections
of 0.935--0.976 onto the total response at layers 14 and 21. Thus
frozen-feature phase edits and prompt reorderings answer different
experimental questions.

\subsection{The joint interaction improves downstream prediction}
The joint-cache study fixes six settings before evaluation:
Qwen2.5-0.5B-Instruct at layer 18 and Qwen2.5-1.5B-Instruct at
layers 14 and 21, each on retrieval and SST-2. Each setting uses 32
prompt pairs and eight spans, with no filtering by accuracy or outcome.
Independent key-only, value-only, and joint donor replacements supply
4,608 executions; two additional joint-edit strengths at 1.5B
retrieval layer 21 bring the total to 5,120.

\begin{table}[t]
\centering\small
\caption{\textbf{Joint-edit prediction across all six settings.}
Downstream margin MAE over 32 prompt pairs and eight joint edits each.
Separate uses the exact finite-key response plus baseline-weighted
values. Quadratic adds the leading mixed term; exact adds $C_{KV}$.
Complete controls and paired intervals are in
Tables~\ref{tab:jointcoverage} and~\ref{tab:jointpaired}.}
\label{main:jointtable}
\begin{tabular}{lllrrr}
\toprule
Model & Task & Layer & Separate & Quadratic & Exact\\
\midrule
0.5B & Retrieval & 18 & 0.001042 & $2.98{\times}10^{-4}$ & $4.37{\times}10^{-6}$\\
0.5B & SST-2 & 18 & 0.003099 & $5.99{\times}10^{-4}$ & $2.64{\times}10^{-5}$\\
1.5B & Retrieval & 14 & 0.003342 & 0.001525 & $8.22{\times}10^{-5}$\\
1.5B & Retrieval & 21 & 0.107864 & 0.087683 & 0.007844\\
1.5B & SST-2 & 14 & 0.003194 & 0.001418 & $3.42{\times}10^{-4}$\\
1.5B & SST-2 & 21 & 0.109616 & 0.080700 & 0.004510\\
\bottomrule
\end{tabular}
\end{table}

Retaining $C_{KV}$ lowers error by 89.3--99.6\% against separate
attribution, a reduction of more than a factor of nine in every setting,
and by 75.9--98.5\% against the quadratic correction. All individual
paired 95\% intervals favour exact joint. Its contraction correlates
0.9970--0.99999 with the independently executed downstream four-cell
interaction. The four-cell target is measured after executing the remaining
network; the comparison tests whether the isolated local interaction
accounts for that downstream effect.

At strengths $\alpha=0.1,0.5,1$, exact-joint MAEs are
0.000173, 0.002330, and 0.007844; quadratic MAEs are 0.000237,
0.013071, and 0.087683. At the smallest strength, both recover the
two most influential spans perfectly. The gap grows with edit strength,
while the exact predictor's own error also grows
(Figure~\ref{main:empirical}). Appendix~\ref{sec:jointexperiment}
gives the complete execution protocol and additional plots.

\begin{figure}[t]
\centering
\includegraphics[width=\linewidth]{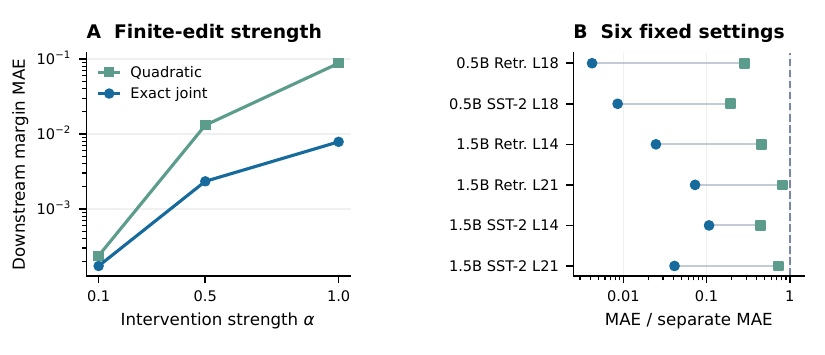}
\caption{\textbf{Retaining the joint interaction improves executed-effect
prediction.} Left: downstream MAE at the three measured edit strengths
on 1.5B retrieval, layer 21; lines connect measured means on a log scale.
Right: the six full-replacement settings, with each MAE divided by the
separate predictor's MAE in that setting. Absolute values are in
Table~\ref{main:jointtable}. Predictors share a baseline gradient.
Points show reported means; paired 95\% intervals are reported in
Table~\ref{tab:jointpaired}.}
\label{main:empirical}
\end{figure}

\subsection{Ranking and repair decisions}
Deletion-effect ranking attains Spearman correlations of 0.950--0.989
against executed effects. The matched single-entry precision-restoration
comparison reduces mean relative RMSE by 4.86\%, 8.13\%, and 6.67\%
in three task/precision conditions, with paired difference intervals
below zero; across all six, the point estimate improves in five and ties
in one. Four-entry greedy restoration improves 19.8--54.9\% against
fixed second-order selection. This evaluates the complete greedy procedure,
including re-scoring after each restoration
(Appendix~\ref{sec:positionalapplications}).

The accompanying instrument also records a selected synthetic API repair
on Qwen2.5-0.5B-Instruct. Ordinary generation passes 0/6 fixed checks,
source emphasis passes 1/6, and an installed no-op reproduces the 0/6
failure. A ranked attention edit produces a program that passes 6/6.
This example illustrates the score--execute--check workflow; its adaptive
selection and supplied ranking target are documented in
Appendix~\ref{app:answerrepair}.

\subsection{Preparation, reuse, and numerical agreement}
The sparse and batched-dense timings request the same scalar local
response after contraction with a fixed gradient. On the native A100
case with 112 tokens and eight spans, prepared sparse scoring takes
0.336\,ms versus 0.925\,ms. Including preparation, one strength takes
2.521\,ms versus 2.083\,ms; three strengths sharing preparation take
3.292\,ms versus 4.014\,ms. Across seven workloads, preparation-inclusive
reuse at three strengths reduces latency by 18.0--21.8\%, while a
single strength is 18.2--21.0\% slower. These are local evaluation
costs with gradient acquisition excluded, not full-inference speedups
or comparisons with fused kernels (Appendix~\ref{sec:jointexperiment}).

Native reconstruction checks the effective-weight identity with affine
biases and grouped-query attention. The joint sweep's largest FP64
sparse--dense discrepancy is $1.31\times10^{-14}$; its largest local
contraction discrepancy against FP32 execution is $2.60\times10^{-6}$.
The positional task workers record 1,166,880 passing numerical checks.
These verify implementation agreement independently of downstream accuracy.

\section{Related work}
\label{main:related}
RoPE supplies the relative-position rotation \citep{su2021}, and
$\ph$ is classical in exponential integration \citep{hochbruck2010}.
We connect these ingredients through finite attention responses, their
rotary/softmax error decomposition, and operational intervention rules.
The positive quadratic curvature recovers the attention-preserving query
subspace; affine log odds then give a minimum-norm control construction.
The effective-weight result characterises arbitrary
affine projections with softmax intact through a query-conditioned
quadratic, complementing linear-attention constructions for a specified
least-squares learner \citep{vonoswald2023}, fast weights
\citep{schlag2021}, implicit updates \citep{dai2023}, and Hopfield
correspondences \citep{ramsauer2021}. Appendix~\ref{sec:regressionpilot}
separately compares trained predictions with an independent regression learner.

Attribution patching supplies the shared-gradient scoring framework
\citep{syed2024}. Relative to AtP$^*$'s finite QK correction at fixed
values \citep{kramar2024}, our joint decomposition identifies the additional
interaction when keys and values change together. The separate comparator
retains that finite-key correction; the quadratic comparator also includes
the leading mixed term. PASTA steers attention toward user-specified spans
\citep{zhangpasta2024}; here the emphasis is on explicit finite responses,
feasible ratio constraints, candidate scoring, and executed verification.
Our local distortion criteria also provide components for cache-selection
methods based on observed attention patterns, such as
SnapKV \citep{li2024snapkv}.
We specify intervention endpoints, controls, metrics, and statistical units
following the methodological concerns of \citet{zhangnanda2024}.

\section{Discussion and limitations}
\label{main:discussion}
The value of the compact identity is the set of consequences it exposes:
an explicit interaction for attribution, a geometry for constructing
attention edits, and local error criteria for cache decisions and
approximation checks. These operations require neither a learned surrogate
response nor a small-edit expansion. The positional and joint-KV studies
establish that retaining the finite effects improves downstream prediction
in the evaluated settings; frequency and strength sweeps identify where
the correction matters. This makes the calculus a building block that
can be incorporated into other interpretability and inference methods.

The positional construction assumes fixed frequencies and frozen content
at the edited readout; actual prompt reordering also changes contextual
features. Empirical coverage remains within the Qwen family, and answer
prediction uses a frozen downstream gradient. The KL certificate's
empirical tightness is unmeasured. Multi-entry restoration lacks a matched
second-order re-scoring control, and the API repair is an adaptively
selected demonstration. Total intervention-search cost, including gradient
acquisition and executed verification, remains unmeasured.

The resulting instrument makes finite interventions explicit and
reviewable: specify the attention relationship or edit family, compute
the local response, rank candidates against a stated objective, and check
selected edits by execution. The gradient-step representation and
gauge-invariant query features characterise the map being controlled.
\label{main:end}
\clearpage

\section*{Reproducibility statement}
The appendices preserve the complete derivations, model revisions,
prompt construction, calibration policy, numerical tolerances, and
prompt-paired evaluation procedures from the full study.
Appendix~\ref{app:jointsupplement} gives the fixed joint-KV sweep and
full tables; Appendix~\ref{app:jointprotocol} describes the
standalone runner, native preflights, and replay captures.
The source includes a figure-generation script and the reported summary
values used in the empirical plots; the tables retain the paired intervals.
Experimental code, saved result archives, and the instrument
are separate research artifacts supporting the preprint.

\section*{AI use statement}
Generative AI tools assisted with experimental design and interpretation,
software implementation and debugging, mathematical exposition and
derivation checks, manuscript organisation and editing, and preparation
of explanatory figures. The reported computations use numerical
preflights, native execution controls, and saved measurements as
described in the experimental appendices. The authors are responsible
for the final claims, evidence, code, and manuscript.

\clearpage
\appendix
\counterwithin{equation}{section}
\counterwithin{table}{section}
\counterwithin{figure}{section}
\counterwithin{theorem}{section}
\section{RoPE derivatives and finite differences}
\label{sec:positionaloperators}
For fixed content $z_0$, let $z(p)=R(p)z_0$ be a rotated column feature. The generator
in \eqref{main:rotation} gives
\begin{equation}
\partial_p^n z(p)=A^n z(p),\qquad n\ge1.
\label{eq:derivatives}
\end{equation}
The matrix extension of \eqref{eq:rho} is defined by the convergent power series
$\rho(B)=\sum_{n\ge0}B^n/(n+1)!$. Consequently
\begin{equation}
\boxed{\quad e^{\delta A}-I=\delta A\rho(\delta A),\qquad
z(p+\delta)-z(p)=\delta A\rho(\delta A)z(p).\quad}
\label{eq:matrixrho}
\end{equation}
The power-series definition also covers zero-frequency blocks. These identities expose
RoPE as a bank of position-response operators on frozen features: $A$ gives the
infinitesimal response and $A\rho(\delta A)$ gives the exact finite-difference quotient.
The same entire function converts the scalar softmax increment $e^s-1$ into
$s\rho(s)$ and the matrix rotary increment $e^{\delta A}-I$ into
$\delta A\rho(\delta A)$. In \eqref{eq:original}, rotary position structure enters
through $R(p_j-p_i)$ and its scores, while $\rho(s_{ij})$ retains their full exponential
response. The derivatives in \eqref{eq:derivatives} are with respect to position at fixed content.

\begin{figure}[htbp]
\centering
\includegraphics[width=\linewidth]{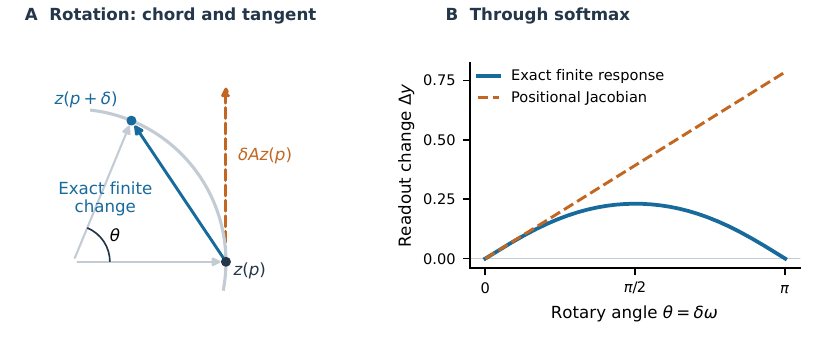}
\caption{\textbf{The derivative determines the tangent; its integral gives
the finite response.} Left: a single rotary plane, with the exact chord
and the tangent step at the same displacement. Right: an analytic two-key
example with scores $(\sin\theta,0)$ and scalar values $(1,0)$.
The exact readout change is
$\operatorname{sigmoid}(\sin\theta)-1/2$; the positional Jacobian gives
$\theta/4$. This illustration separates the mathematical issue from
downstream model behaviour; it is not an empirical model measurement.}
\label{fig:ropegeometry}
\end{figure}

\section{Exact effective weights with softmax intact}
\label{sec:effectiveweights}
\subsection{RoPE, affine projections, and the attended bank}
Let $x_i\in\R^m$ be a column input, with query/key width $d$ and value width $r$.
Use row-vector projections
\begin{equation}
q_i=x_i\T W_q+b_q,\qquad
k_j=x_j\T W_k+b_k,\qquad
v_j=x_j\T W_v+b_v,
\label{eq:projections}
\end{equation}
where $W_q,W_k\in\R^{m\times d}$, $W_v\in\R^{m\times r}$, and biases are rows.
For standard RoPE with fixed frequencies \cite{su2021}, set $\tau=\sqrt d$ and
\begin{equation}
R(p)=e^{pA},\qquad
A=\operatorname{diag}_{\ell=1}^{d/2}
\begin{pmatrix}0&-\omega_\ell\\ \omega_\ell&0\end{pmatrix},\qquad
R(p_i)\T R(p_j)=R(p_j-p_i).
\label{eq:rotation}
\end{equation}
The rotated rows are $\widetilde q_i=q_iR(p_i)\T$ and
$\widetilde k_j=k_jR(p_j)\T$. Augment the query input and projection by
\begin{equation}
u_i=\begin{pmatrix}x_i\\1\end{pmatrix},\qquad
\widehat W_q=\begin{pmatrix}W_q\\b_q\end{pmatrix},\qquad
L_i=\frac{\widehat W_qR(p_i)\T}{\tau},\qquad
a_{ij}=L_i\widetilde k_j\T.
\label{eq:features}
\end{equation}
Then the raw score is exactly $s_{ij}=\widetilde q_i\widetilde k_j\T/\tau=u_i\T a_{ij}$.
In the bias-free case this is
$s_{ij}=x_i\T W_qR(p_j-p_i)W_k\T x_j/\tau$.
Let $\Iset_i$ be a finite nonempty set of permitted keys; causal masking selects this
set. For deterministic attention, define
\begin{equation}
N_i=|\Iset_i|,\qquad Z_i=\sum_{j\in\Iset_i}e^{s_{ij}},\qquad
\mu_i=\frac1{N_i}\sum_{j\in\Iset_i}v_j,\qquad
y_i=\frac1{Z_i}\sum_{j\in\Iset_i}e^{s_{ij}}v_j.
\label{eq:attention}
\end{equation}
Here $\mu_i$ is the uniform arithmetic mean of the permitted values.

\subsection{The exponential divided difference}
The entire function
\begin{equation}
\rho(s)=\varphi_1(s)=\exp[0,s]
=\int_0^1e^{ts}\,\dd t
=\sum_{n=0}^{\infty}\frac{s^n}{(n+1)!}
=\begin{cases}(e^s-1)/s,&s\ne0,\\1,&s=0\end{cases}
\label{eq:rho}
\end{equation}
is the first divided difference of the exponential at $0$ and $s$, with its continuous
value at coincident arguments. It is the standard $\varphi_1$ of exponential
integration \cite{hochbruck2010}; $\rho(s)>0$ for real $s$ and $e^s-1=s\rho(s)$.

\begin{theorem}[Exact effective-weight representation]
\label{thm:effectiveappendix}
For the head in \eqref{eq:projections}--\eqref{eq:attention}, define
\begin{equation}
\boxed{\quad c_{ij}=\frac{\rho(s_{ij})}{Z_i},\qquad
\DM_i=\sum_{j\in\Iset_i}c_{ij}a_{ij}(v_j-\mu_i),\qquad
y_i=\mu_i+u_i\T\DM_i.\quad}
\label{eq:identity}
\end{equation}
The last equality holds exactly for every query and all finite projection weights
and inputs.
\end{theorem}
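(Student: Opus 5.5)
The plan is to expand $y_i-\mu_i$ and rewrite each exponential weight through the scalar identity $e^s-1=s\rho(s)$ from \eqref{eq:rho}.

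First I use normalisation. Since $\sum_{j\in\Iset_i}e^{s_{ij}}/Z_i=1$, the readout can be centred:
\[
y_i-\mu_i=\frac1{Z_i}\sum_{j\in\Iset_i}e^{s_{ij}}(v_j-\mu_i).
\]
Next I split $e^{s_{ij}}=1+(e^{s_{ij}}-1)$. The constant part contributes $\frac1{Z_i}\sum_j(v_j-\mu_i)$. This vanishes exactly because $\mu_i$ is the uniform mean over $\Iset_i$, and that cancellation is why the uniform mean is chosen as the centre. What remains is
\[
y_i-\mu_i=\frac1{Z_i}\sum_{j}(e^{s_{ij}}-1)(v_j-\mu_i)=\sum_j\frac{\rho(s_{ij})}{Z_i}\,s_{ij}\,(v_j-\mu_i).
\]

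Finally I substitute the bilinear score. From \eqref{eq:features}, $s_{ij}=u_i\T a_{ij}$, and this scalar can be moved to the front of each term:
\[
s_{ij}(v_j-\mu_i)=u_i\T a_{ij}(v_j-\mu_i),
\]
where $a_{ij}(v_j-\mu_i)$ is a column times a row, i.e.\ an $(m+1)\times r$ outer product. Pulling out $u_i\T$ gives $u_i\T\DM_i$ with $c_{ij}=\rho(s_{ij})/Z_i$, which is the claimed representation.

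The only point needing care is exactness for all finite inputs. Three facts cover it:
\begin{itemize}
\item $\rho$ is entire, so $s_{ij}=0$ needs no division.
\item $Z_i>0$ because $\Iset_i$ is nonempty.
\item $s_{ij}=u_i\T a_{ij}$ holds exactly with biases, because of the augmentation and $R(p_i)\T R(p_j)=R(p_j-p_i)$ from \eqref{eq:rotation}.
\end{itemize}
I expect no real obstacle beyond checking the orientation and dimensions in this last factorisation, namely that $a_{ij}=L_i\widetilde k_j\T\in\R^{m+1}$ is a column and $u_i\T a_{ij}$ is scalar.
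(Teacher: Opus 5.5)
Your proposal is correct and is essentially the paper's proof: centre the readout using normalisation, drop the constant part of $e^{s_{ij}}=1+(e^{s_{ij}}-1)$ because the values centred at the uniform mean sum to zero, apply $e^s-1=s\rho(s)$, and factor out $u_i\T$ via $s_{ij}=u_i\T a_{ij}$. One minor remark: that last factorisation follows directly from $\widetilde q_i=u_i\T\widehat W_qR(p_i)\T$ and the definition of $L_i$, so the relative-rotation identity $R(p_i)\T R(p_j)=R(p_j-p_i)$ is not needed here; it only enters the bias-free form of the score.
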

\begin{proof}
The centred values sum to zero. Therefore
\begin{align*}
y_i-\mu_i
&=Z_i^{-1}\sum_j e^{s_{ij}}(v_j-\mu_i)
 =Z_i^{-1}\sum_j(e^{s_{ij}}-1)(v_j-\mu_i)\\
&=Z_i^{-1}\sum_j s_{ij}\rho(s_{ij})(v_j-\mu_i)
 =u_i\T\sum_j c_{ij}a_{ij}(v_j-\mu_i).
\end{align*}
\end{proof}
In particular, without projection biases the effective matrix in the original weights is
\begin{equation}
\DM_i=\frac1{\tau Z_i}\sum_{j\in\Iset_i}\rho(s_{ij})
\bigl[W_qR(p_j-p_i)W_k\T x_j\bigr]\bigl[x_j\T W_v-\mu_i\bigr].
\label{eq:original}
\end{equation}
At a fixed bank and query position, write $L=L_i$ and $a_j=a_{ij}$. Then
$\DM_i=LT_i$, where
$T_i=\sum_j c_{ij}\widetilde k_j\T(v_j-\mu)\in\R^{d\times r}$.
The cache supplies the outer-product directions; the query selects their positive
coefficients. This gives $\operatorname{rank}(\DM_i)\le d$ in this representation.

\begin{corollary}[A query-conditioned unit gradient step]
For each fixed query, define $B\in\R^{(m+1)\times r}$ and
\begin{equation}
\mathcal E_i(B)=\frac12\sum_{j\in\Iset_i}c_{ij}
\norm{a_{ij}\T B-(v_j-\mu_i)}_2^2.
\quad\text{Then}\quad
0-\nabla_B\mathcal E_i(0)=\DM_i.
\label{eq:gradient}
\end{equation}
\end{corollary}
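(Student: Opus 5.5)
The plan is to differentiate the quadratic directly, treating $c_{ij}$, $a_{ij}$ and $v_j-\mu_i$ as constants. Theorem~\ref{thm:effectiveappendix} then identifies the result with $\DM_i$. These quantities are computed from the fixed query, bank and position, so none of them depends on $B$, and $\mathcal E_i$ is a genuine quadratic in $B\in\R^{(m+1)\times r}$.

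For a single term, put $e_j(B)=a_{ij}\T B-(v_j-\mu_i)$. This is a row vector in $\R^{1\times r}$ because $a_{ij}\in\R^{m+1}$. Perturb $B$ to $B+\dd B$. The differential of $\tfrac12\norm{e_j}_2^2$ is $e_j\,(\dd B)\T a_{ij}=\operatorname{tr}\bigl((a_{ij}e_j)\T\dd B\bigr)$. Hence
\[
\nabla_B\mathcal E_i(B)=\sum_{j\in\Iset_i}c_{ij}\,a_{ij}\bigl(a_{ij}\T B-(v_j-\mu_i)\bigr),
\]
which is an $(m+1)\times r$ outer-product sum. At $B=0$ the first term vanishes, so $-\nabla_B\mathcal E_i(0)=\sum_jc_{ij}a_{ij}(v_j-\mu_i)$. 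This is exactly $\DM_i$ as defined in \eqref{eq:identity}. Therefore $0-\nabla_B\mathcal E_i(0)=\DM_i$, and by Theorem~\ref{thm:effectiveappendix} the unit step reproduces $y_i=\mu_i+u_i\T\DM_i$.

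No step is substantive. The only points that need care are bookkeeping. First, check the shape and orientation convention: with row-vector values the gradient must be $a_{ij}(\cdot)$ rather than its transpose. Second, state explicitly that the coefficients $c_{ij}=\rho(s_{ij})/Z_i$ are frozen during differentiation. They depend on $u_i$ but not on $B$, and that is what makes this a gradient of a fixed quadratic rather than of the attention map itself. I would also note that $c_{ij}>0$ because $\rho>0$, so $\mathcal E_i$ is a convex weighted least-squares objective. Its Hessian is $\sum_jc_{ij}a_{ij}a_{ij}\T\otimes I_r$, which gives the positive curvature used later.
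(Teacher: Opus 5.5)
Your proposal is correct and is essentially the paper's own proof. With $c_{ij}$, $a_{ij}$ and $v_j-\mu_i$ frozen, $\nabla_B\mathcal E_i(B)=\sum_j c_{ij}a_{ij}\bigl(a_{ij}\T B-(v_j-\mu_i)\bigr)$, and evaluating at $B=0$ gives $\DM_i$ from \eqref{eq:identity}; your side remarks are sound, except that writing the Hessian as $\sum_jc_{ij}a_{ij}a_{ij}\T\otimes I_r$ presumes row-stacking vectorisation (convention-free, it is $X\mapsto\bigl(\sum_jc_{ij}a_{ij}a_{ij}\T\bigr)X$, as the paper states).
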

\begin{proof}
With the query, features, centred values, and coefficients held fixed,
$\nabla_B\mathcal E_i(B)=\sum_j c_{ij}a_{ij}(a_{ij}\T B-(v_j-\mu_i))$;
evaluation at $B=0$ gives the result.
\end{proof}
A unit step from zero on $\mathcal E_i$ gives the effective matrix and its exact
readout $y_i=\mu_i+u_i\T\DM_i$. The regression features are $a_{ij}$ and the
responses are the centred cached values $v_j-\mu_i$; every coefficient is computed
directly from the query and its bank through \eqref{eq:identity}. The step therefore
identifies a matrix in input coordinates and the tokenwise outer products that
form it. At a fixed bank and position, changing the query reweights these same
directions. Section~\ref{sec:effectivemaps} makes that change explicit through the
score-shift family and the exact residual from reusing an anchor matrix.

As all scores approach zero, $c_{ij}\to1/N_i$; the first-order readout is
$\mu_i+N_i^{-1}\sum_j s_{ij}(v_j-\mu_i)$. This is the centred outer-product form
underlying linear-attention gradient constructions. The corresponding regression
step is specified by the feature/label encoding, projection choices, and step scale
of von Oswald et al.\ \cite{vonoswald2023}.

\section{Residual coordinates, gauge, and exact reuse error}
\label{sec:effectivemaps}
\subsection{Residual write and score-shift family}
In the bias-free case, if the head input is also the residual input and
$W_o\in\R^{r\times m}$, then
\begin{equation}
x_i\T+y_iW_o=x_i\T(I_m+\DM_iW_o)+\mu_iW_o.
\label{eq:residual}
\end{equation}
For a pre-normalised head, use $x_i=\operatorname{Norm}(h_i)$ and the exact residual
$h_i\T+(\mu_i+u_i\T\DM_i)W_o+b_o$; projected head writes are summed for multi-head
attention. Equation~\eqref{eq:identity} is a secant representation of the readout.
At a fixed bank and position, its differential is
$\dd y_i=\dd u_i\T\DM_i+u_i\T\dd\DM_i$; the second term contains the
query derivatives of the coefficients.

Effective matrices depend on a score-shift convention. For any fixed
$g\in\R^{m+1}$, put $\sigma_i=u_i\T g$. Softmax is unchanged by subtracting
$\sigma_i$ from every attended score. Applying Theorem~\ref{thm:effectiveappendix} to the shifted features gives
\begin{equation}
\begin{aligned}
a_{ij}^g&=a_{ij}-g,&
c_{ij}^g&=\frac{e^{\sigma_i}\rho(s_{ij}-\sigma_i)}{Z_i},\\
\DM_i^g&=\sum_j c_{ij}^g a_{ij}^g(v_j-\mu_i),&
y_i&=\mu_i+u_i\T\DM_i^g.
\end{aligned}
\label{eq:gauge}
\end{equation}
Key shifts give the subfamily $g=L_i b$, of dimension at most $d$ at a fixed position.
We call \eqref{eq:identity} the \emph{original-score gauge}.
The displayed transformations give a family of exact matrices
for the same output map.

\subsection{Global affine representability}
\begin{proposition}[Characterisation of global affine readouts]
Fix a finite nonempty key/value bank and query position, and let
$F:\R^m\to\R^r$ be its softmax head output as the query coordinate ranges over
all of $\R^m$. A fixed affine representation $F(x)=b+x\T M$ exists for every
$x\in\R^m$ if and only if $F$ is constant. In that case $M=0$ and $b=F(0)$.
\label{prop:bounded}
\end{proposition}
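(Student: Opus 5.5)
The key fact is that $F$ is bounded: $F(x)=\sum_j p_j(x)v_j$ with $p(x)$ a probability vector, so every output lies in the convex hull of the finitely many attended values $\{v_j:j\in\Iset\}$. In particular $\norm{F(x)}_2\le\max_j\norm{v_j}_2$ for every $x\in\R^m$. This holds for any query coordinate, including affine biases and the rotary factor at the fixed position, because only the scores depend on $x$.

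The \emph{if} direction is immediate. If $F$ is constant, take $M=0$ and $b=F(0)$.

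For \emph{only if}, suppose $F(x)=b+x\T M$ for all $x\in\R^m$. Setting $x=0$ gives $b=F(0)$. Now suppose $M\neq0$. Then some column of $M$, or more simply some row vector $x\T M$, is nonzero. Choose $h\in\R^m$ with $h\T M\neq0$ and evaluate along the ray $x=th$. This gives
\[
\norm{F(th)}_2\ \ge\ |t|\,\norm{h\T M}_2-\norm b_2\ \longrightarrow\ \infty \qquad (|t|\to\infty),
\]
which contradicts the uniform bound. Hence $M=0$, and therefore $F(x)=b=F(0)$ for every $x$, so $F$ is constant. This argument also establishes the uniqueness claim in the last sentence of the statement: any such representation must have $M=0$ and $b=F(0)$.

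There is no real obstacle. The one point needing care is that boundedness is uniform in $x$: it depends only on the finite, fixed bank of values, not on the scores. Since the bank and position are fixed in the statement, this holds.

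It may also help to remark why this does not conflict with Theorem~\ref{thm:effectiveappendix}. The matrix $\DM_i$ there depends on the query through $c_{ij}$ and $Z_i$, so it is a secant representation rather than a fixed affine map. The proposition shows that the query dependence cannot be removed unless the head output ignores the query entirely.
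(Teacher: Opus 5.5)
Your proposal is correct and is essentially the paper's own proof. Both use the convex-combination bound $\norm{F(x)}_2\le\max_j\norm{v_j}_2$, pick a direction $h$ with $h\T M\ne0$ along which $b+t\,h\T M$ is unbounded as $|t|\to\infty$, and treat the converse as trivial; your reverse-triangle estimate and your evaluation at $x=0$ to get $b=F(0)$ just spell out steps the paper leaves implicit.
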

\begin{proof}
Every output is a convex combination of the fixed values, so
$\norm{F(x)}_2\le\max_j\norm{v_j}_2$ for all $x$.
If $M\ne0$, choose $z$ with $z\T M\ne0$; then $\norm{b+t z\T M}_2\to\infty$
as $|t|\to\infty$. Hence $M=0$ and $F$ is constant. Conversely, a constant
$F$ has the stated affine representation.
\end{proof}
Matrix transfer on a finite query set is quantified by the following exact residual.

\subsection{The exact term omitted by freezing}
Fix the bank and query position so that $a_j$ and $\mu$ are shared, and select an
anchor query $a$. Define $y_i^{\mathrm{fr}}=\mu+u_i\T\DM_a$. Then
\begin{equation}
\boxed{\quad
e_i=y_i-y_i^{\mathrm{fr}}
=u_i\T(\DM_i-\DM_a)
=\sum_{j\in\Iset}(c_{ij}-c_{aj})(u_i\T a_j)(v_j-\mu).
\quad}
\label{eq:tokenerror}
\end{equation}
Thus $y_i=\mu+u_i\T\DM_a+e_i$, and $e_iW_o$ is the exact projected correction.
Each summand resolves this correction by cached token. In bias-free coordinates,
freezing anchor $0$ omits exactly $x_i\T(\DM_i-\DM_0)$, with
\begin{equation}
\norm{e_i}_2\le\norm{u_i}_2\norm{\DM_i-\DM_a}_{\mathrm{op}}.
\label{eq:bound}
\end{equation}
The fixed-position condition makes the features shared; changing the position also
changes $L_i$, and changing the bank can change both features and mean.

\subsection{Positive curvature and attention-preserving edits}
\label{app:curvature}
Fix the bank, mask, query position, and affine query projection, and
choose a reference key $j_0$. Suppress the query index and use the
score-shift gauge $g=a_{j_0}$ from \eqref{eq:gauge}. Write
\begin{equation}
 \begin{gathered}
 \widetilde a_j=a_j-a_{j_0},\qquad
 \widetilde s_j=u\T\widetilde a_j,\qquad
 \widetilde c_j=\frac{\rho(\widetilde s_j)}{\sum_k e^{\widetilde s_k}}>0,\\
 \mathcal E_u^{j_0}(B)=\frac12\sum_j\widetilde c_j
       \norm{\widetilde a_j\T B-(v_j-\mu)}_2^2,\qquad
 G_u=\sum_j\widetilde c_j\widetilde a_j\widetilde a_j\T.
 \end{gathered}
 \label{eq:curvaturegauge}
\end{equation}
Theorem~\ref{thm:effectiveappendix} applies in this gauge, giving
$y-\mu=u\T[-\nabla_B\mathcal E_u^{j_0}(0)]$.
With the coefficients frozen during differentiation, the Hessian acts
on a matrix perturbation $X$ as $X\mapsto G_uX$.
Let $P=(I_m,0)\T$ embed an input edit $h$ as $(h\T,0)\T$, so that
the homogeneous bias coordinate is held fixed. Let $H_{\rm all}$ have
rows $\widetilde a_j\T P$ over all attended keys, including the zero
reference row, and define $\Gamma_u=P\T G_uP$.

\begin{corollary}[Curvature characterises attention-preserving edits]
\label{cor:curvaturecontrol}
Under the fixed-bank assumptions above,
\begin{equation}
 \ker\Gamma_u=\ker H_{\rm all}
 =\{h:p(x+h)=p(x)\}.
 \label{eq:curvaturekernel}
\end{equation}
This kernel is independent of $x$ and of the chosen reference key.
\end{corollary}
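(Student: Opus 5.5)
The plan is to prove the two equalities in \eqref{eq:curvaturekernel} separately and then read off the independence claims.

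\emph{First equality.} By definition,
\[
\Gamma_u=P\T G_uP=\sum_j\widetilde c_j\,(P\T\widetilde a_j)(P\T\widetilde a_j)\T=H_{\rm all}\T C H_{\rm all},
\qquad C=\operatorname{diag}(\widetilde c_j).
\]
Since $\rho>0$ on the reals, every $\widetilde c_j>0$, so $C$ is positive definite. Then
\[
h\T\Gamma_u h=\sum_j\widetilde c_j\,(\widetilde a_j\T Ph)^2 ,
\]
and $\Gamma_u$ is positive semidefinite. For a positive semidefinite matrix, $\Gamma_uh=0$ holds iff $h\T\Gamma_uh=0$. By the display, that happens iff $\widetilde a_j\T Ph=0$ for every $j$, i.e.\ iff $H_{\rm all}h=0$. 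This gives $\ker\Gamma_u=\ker H_{\rm all}$. The reference row is zero, so it imposes no constraint.

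\emph{Second equality.} With the bank, mask, position and affine projection fixed, the features $a_j$ do not depend on the query input. This is the key use of the fixed-bank assumption. Hence the scores are exactly affine in the edit:
\[
s_j(x+h)=(u+Ph)\T a_j=s_j(x)+h\T P\T a_j .
\]
Relative scores therefore shift by $\widetilde s_j(x+h)-\widetilde s_j(x)=(H_{\rm all}h)_j$, exactly and for finite $h$. Softmax is invariant under adding a common constant to all scores, and it determines the scores up to exactly such a constant: if $p=p'$ then $s'_j-s_j=\log p'_j-\log p_j+\log(Z'/Z)$ is constant in $j$.

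\emph{Direction $\ker H_{\rm all}\subseteq\{h:p(x+h)=p(x)\}$.} If $H_{\rm all}h=0$, every raw score shifts by the common amount $h\T P\T a_{j_0}$, so $p$ is unchanged.

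\emph{Direction $\{h:p(x+h)=p(x)\}\subseteq\ker H_{\rm all}$.} If $p(x+h)=p(x)$, the raw score increments $h\T P\T a_j$ are a constant $\kappa$ in $j$. Taking $j=j_0$ identifies $\kappa=h\T P\T a_{j_0}$, so each $h\T P\T\widetilde a_j=0$.

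\emph{Independence.} $H_{\rm all}$ is built only from $a_j$, $a_{j_0}$ and $P$, so it does not involve $x$. It also does not depend on the reference key: changing the reference to $j_1$ subtracts the row $\widetilde a_{j_1}\T P$ from every row, which is an invertible row operation since that row belongs to the set. Alternatively, the set $\{h:p(x+h)=p(x)\}$ is described as ``all increments $h\T P\T a_j$ equal,'' which involves neither $x$ nor $j_0$.

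The main obstacle is bookkeeping rather than analysis. One must make sure the Hessian is taken with $\widetilde c_j$ frozen, so that $\Gamma_u$ really is the weighted Gram matrix. One must also restrict via $P$ so the bias coordinate is held fixed; otherwise the kernel would pick up the homogeneous direction. Positivity of $\rho$ does all the real work in the first equality.
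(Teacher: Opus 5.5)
Your proposal is correct and follows essentially the paper's argument: positivity of the $\widetilde c_j$ makes $h\T\Gamma_uh=\sum_j\widetilde c_j(\widetilde a_j\T Ph)^2$, relative scores are the exact log probability ratios, and the row space of pairwise feature differences gives independence of $x$ and $j_0$; your positive-semidefinite step $\Gamma_uh=0\iff h\T\Gamma_uh=0$ simply makes explicit a step the paper leaves implicit. One wording point: subtracting row $j_1$ from every row is not by itself an invertible operation, since $I-\mathbf 1e_{j_1}\T$ is singular. It preserves the row space here only because the old zero row $j_0$ becomes $-\widetilde a_{j_1}\T P$, so your second argument (all increments $h\T P\T a_j$ equal) is the cleaner justification.
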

\begin{proof}
Positivity gives
\[
 h\T\Gamma_uh
 =\sum_j\widetilde c_j(\widetilde a_j\T Ph)^2,
\]
which vanishes exactly when every relative score is unchanged. Since
$\log(p_j/p_{j_0})=u\T\widetilde a_j$, these equalities are equivalent
to preserving every probability ratio, and hence the positive normalised
distribution. The rows of $H_{\rm all}$ span all pairwise feature
differences, independently of the reference. The positive weights may
change with $x$, but their null space does not.
\end{proof}
Thus the quadratic exposes an exact distribution-preserving subspace in
the editable coordinates. The next construction uses the same feature
differences to test feasibility and produce the smallest requested edit.
Values can introduce additional output cancellations: preserving the
head output is a weaker condition than preserving its attention distribution.

\subsection{Gauge-invariant query control}
\label{app:querycontrol}
Fix the key/value bank, mask, and affine query projection at one head
readout. Apply any prescribed positional key edit first, obtaining
$a'_j=((\bar a'_j)\T,\beta'_j)\T$. For a query input $x$ and edit $h$,
\begin{equation}
 \log\frac{p_j(x+h)}{p_k(x+h)}
 =u\T(a'_j-a'_k)+(\bar a'_j-\bar a'_k)\T h .
 \label{eq:querylogodds}
\end{equation}
All probabilities here refer to the same edited bank. Softmax's common
denominator cancels in the ratio, so this equation is affine and exact
for arbitrary finite $h$. Form $H$ and $b$ as in
Section~\ref{main:instrument}. The requested log ratios are achievable
if and only if $b\in\operatorname{range}(H)$, equivalently
$(I-HH^\dagger)b=0$. In that case every solution is
\[
 h=H^\dagger b+n,\qquad n\in\ker H.
\]
The two terms are orthogonal, giving
$\norm h_2^2=\norm{H^\dagger b}_2^2+\norm n_2^2$ and proving
the unique minimum-norm claim. With all attended key pairs included,
preserving their ratios is equivalent to preserving the entire positive,
normalised attention distribution. Thus $\ker H$ gives its invisible
query directions and the row space of $H$ gives its controllable
query subspace. A common feature shift $a'_j\mapsto a'_j-g$ leaves
both $H$ and $b$ unchanged, so this construction is invariant to the
score-shift gauge. In contrast, $\DM_i$ is a query-dependent secant
matrix; its kernel alone need not characterise attention-preserving edits.

Minimum norm refers to the affine head-input coordinate $x$, with the
bank frozen. Realisation through an upstream normalisation or other
constrained parameterisation requires the corresponding constrained solve.
For inconsistent constraints, $H^\dagger b$ is only a least-squares
solution: its nonzero residual must be reported rather than treating the
requested ratios as achieved. Numerical implementations use an SVD rank
tolerance and verify the achieved ratios against recomputed probabilities.

\section{Finite changes and cache attribution}
\label{sec:finitechanges}
At a fixed query, a cache edit changes the effective-matrix readout and may change
the uniform value mean. Their combined change is the exact attention increment.
We express it in cache coordinates to separate key, value, and interaction effects
and obtain the local response used for downstream attribution.

\subsection{Exact score and value increments}
Fix one query and its attended set, and suppress the query index. Let
$p_j=e^{s_j}/Z$, $y=\sum_jp_jv_j$, and consider changed scores and values
$s'_j=s_j+\delta_j$, $v'_j=v_j+\varepsilon_j$. Define
\begin{equation}
 D=\sum_jp_je^{\delta_j},\qquad
 t_j=\delta_j-\log D,\qquad p'_j=p_je^{t_j}.
 \label{eq:incrementweights}
\end{equation}
The same exponential divided difference as in \eqref{eq:identity} gives
\begin{equation}
\boxed{\quad
 y'-y=\sum_jp_jt_j\rho(t_j)(v_j-y)
             +\sum_jp'_j\varepsilon_j.\quad}
\label{eq:finiteincrement}
\end{equation}
Indeed, $p'_j-p_j=p_j(e^{t_j}-1)=p_jt_j\rho(t_j)$ and
$\sum_j(p'_j-p_j)=0$. Expanding $\sum_jp'_jv'_j-\sum_jp_jv_j$ then proves
\eqref{eq:finiteincrement}. The first term reallocates attention mass and the second
changes the attended values; their sum includes their interaction exactly.
For numerical evaluation, we compute
$t_j=\log p'_j-\log p_j$ using log-softmax. When $t_j\ge0$,
$p'_j-p_j=-p'_j\operatorname{expm1}(-t_j)$; when $t_j<0$,
$p'_j-p_j=p_j\operatorname{expm1}(t_j)$. This evaluates the divided-difference
increment without forming an unnormalised exponential sum.

For a positional edit with the augmented query $u$, values, and attended set
fixed, the uniform mean is unchanged. Applying \eqref{eq:identity} before and
after the edit gives the direct effective-weight relation
\begin{equation}
\begin{aligned}
 \Delta y&=u\T(\DM'-\DM)
 =\sum_j(p'_j-p_j)(v_j-\mu)\\
 &=\sum_j(p'_j-p_j)(v_j-y)
 =\sum_jp_jt_j\rho(t_j)(v_j-y).
\end{aligned}
\label{eq:matrixincrement}
\end{equation}
The change of centre uses $\sum_j(p'_j-p_j)=0$.
Equation~\eqref{eq:matrixincrement} evaluates the effective-weight change
directly in cache coordinates.

\subsection{Attribution patching and exact joint responses}
\label{sec:jointattribution}
For the fixed query above, define the score-level linear response
\begin{equation}
 L_s=\sum_jp_j\delta_j(v_j-y)
     =\sum_jp_jt_j(v_j-y),\qquad
 V_0=\sum_jp_j\varepsilon_j.
 \label{eq:atplinear}
\end{equation}
The equality follows from $\sum_jp_j(v_j-y)=0$. Contracting $L_s+V_0$
with the baseline downstream gradient, after output projection, gives
score/value attribution patching. For finite rotary edits, the positional
Jacobian additionally replaces $\delta_j$ by $\delta_j^{\mathrm{lin}}$.

Set
\begin{equation}
 K_f=\sum_j(p'_j-p_j)(v_j-y),\qquad
 C_{KV}=\sum_j(p'_j-p_j)\varepsilon_j.
 \label{eq:kvcomponents}
\end{equation}
The fixed-value QK correction uses $K_f$ before the gradient contraction.
Indeed, its probability-gradient contraction can be written
\begin{equation}
 \sum_j(p'_j-p_j)\langle g,v_jW_o\rangle
 =\sum_j(p'_j-p_j)\langle g,(v_j-y)W_o\rangle.
 \label{eq:atpcentered}
\end{equation}
This equality is for the sum; the right-hand side defines the centred token
allocation used here. Equation~\eqref{eq:finiteincrement} now reads
\begin{equation}
 \boxed{\quad y'-y=K_f+V_0+C_{KV}.\quad}
 \label{eq:jointattribution}
\end{equation}
Thus adding a fixed-value key correction and a baseline-weighted value correction
omits exactly $C_{KV}$. In a four-cell intervention on probabilities and values,
this term is
\begin{equation}
 C_{KV}=y(p',v+\varepsilon)-y(p',v)
                   -y(p,v+\varepsilon)+y(p,v).
 \label{eq:kvfactorial}
\end{equation}
It measures the change in a value edit's effect when attention is reallocated.
For joint cache edits in the original effective-weight coordinates, the uniform
mean also changes, giving
\begin{equation}
 y'-y=(\mu'-\mu)+u\T(\DM'-\DM)
      =K_f+V_0+C_{KV}.
 \label{eq:jointmatrix}
\end{equation}
The query is fixed in this equality; the key and value bank changes.
Equation~\eqref{eq:jointmatrix} connects changes in the input-coordinate map to
separate key, value, and interaction contributions evaluated in cache coordinates.

The exact score-linearisation remainder is
\begin{equation}
 R_s=K_f-L_s
 =\sum_jp_jt_j\bigl(\rho(t_j)-1\bigr)(v_j-y).
 \label{eq:rhoremainder}
\end{equation}
Consequently, the joint response decomposes into $L_s+V_0$, the softmax
remainder $R_s$, and the interaction $C_{KV}$. A generator-linearised positional
comparator has the additional remainder
$\sum_jp_j(\delta_j-\delta_j^{\mathrm{lin}})(v_j-y)$.

\begin{samepage}
\begin{proposition}[KL mass of the local remainder]
Let $p,p'$ have positive entries on the fixed attended set and define
\begin{equation}
 w_j=p_j\bigl(e^{t_j}-1-t_j\bigr)
     =p_jt_j\bigl(\rho(t_j)-1\bigr).
 \label{eq:remainderweights}
\end{equation}
Then $w_j\ge0$, $R_s=\sum_jw_j(v_j-y)$, and
\begin{equation}
 \sum_jw_j=\operatorname{KL}(p\Vert p')
          =\log D-\sum_jp_j\delta_j.
 \label{eq:klremainder}
\end{equation}
For a fixed downstream gradient $g$, put
$a_j=\langle g,(v_j-y)W_o\rangle$ and
$b_j=\langle g,\varepsilon_jW_o\rangle$. The error of the score/value linear
response for the projected local proxy obeys
\begin{equation}
\begin{split}
 \bigl|\langle g,[(y'-y)-(L_s+V_0)]W_o\rangle\bigr|
 \le{}&\operatorname{KL}(p\Vert p')\max_j|a_j|\\
 &+\operatorname{TV}(p,p')\bigl(\max_j b_j-\min_j b_j\bigr),
\end{split}
 \label{eq:localcertificate}
\end{equation}
where $\operatorname{TV}(p,p')=\tfrac12\sum_j|p'_j-p_j|$.
\end{proposition}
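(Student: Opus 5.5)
The proof splits into three parts: the pointwise identities for $w_j$, the KL identity, and the certificate. For the first, $e^{t}-1=t\rho(t)$ gives $w_j=p_j(e^{t_j}-1-t_j)=p_jt_j(\rho(t_j)-1)$ directly. Nonnegativity is the convexity inequality $e^t\ge1+t$, together with $p_j>0$. Comparing with \eqref{eq:rhoremainder} gives $R_s=\sum_jw_j(v_j-y)$ term by term.

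For the KL identity, sum the weights and use $\sum_jp_je^{t_j}=\sum_jp'_j=1=\sum_jp_j$. This leaves $\sum_jw_j=-\sum_jp_jt_j$. Since $t_j=\log(p'_j/p_j)$, the right-hand side is $\sum_jp_j\log(p_j/p'_j)=\operatorname{KL}(p\Vert p')$. Substituting $t_j=\delta_j-\log D$ and $\sum_jp_j=1$ gives the closed form $\log D-\sum_jp_j\delta_j$.

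For the certificate, start from the decomposition \eqref{eq:jointattribution}: $y'-y=K_f+V_0+C_{KV}$. Subtracting $L_s+V_0$ leaves $R_s+C_{KV}$. Contract with $g$ after $W_o$; since the contraction is linear, the error equals $\sum_jw_ja_j+\sum_j(p'_j-p_j)b_j$. Bound the first sum by $(\sum_jw_j)\max_j|a_j|=\operatorname{KL}(p\Vert p')\max_j|a_j|$, using $w_j\ge0$.

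The second sum is the step that needs care, because a naive bound $\sum_j|p'_j-p_j|\,\max_j|b_j|$ would give $2\operatorname{TV}\max_j|b_j|$ rather than the stated range. Write $\Delta_j=p'_j-p_j$ with $\sum_j\Delta_j=0$, and split into positive and negative parts. Each part has total mass $\operatorname{TV}(p,p')$, so
\[
\sum_j\Delta_jb_j=\sum_{\Delta_j>0}\Delta_jb_j-\sum_{\Delta_j<0}|\Delta_j|b_j
\le\operatorname{TV}\max_jb_j-\operatorname{TV}\min_jb_j .
\]
The symmetric argument gives the matching lower bound. Equivalently, because $\sum_j\Delta_j=0$, one may replace $b_j$ by $b_j-c$ with $c$ the midpoint of the range, and then bound by $2\operatorname{TV}\cdot\tfrac12(\max_jb_j-\min_jb_j)$. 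Adding the two bounds via the triangle inequality yields \eqref{eq:localcertificate}.
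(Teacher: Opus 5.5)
Your proof is correct and follows the paper's argument: convexity of the exponential gives $w_j\ge0$, the normalisation $\sum_jp_je^{t_j}=1$ reduces $\sum_jw_j$ to $-\sum_jp_jt_j=\operatorname{KL}(p\Vert p')$, and the error splits into the contractions of $R_s$ and $C_{KV}$. The $C_{KV}$ term is bounded, as in the paper, because the positive and negative parts of $p'-p$ each carry mass $\operatorname{TV}(p,p')$; you additionally spell out the $R_s+C_{KV}$ split via \eqref{eq:jointattribution} and the equivalent midpoint-recentring view of the range bound, which the paper leaves implicit.
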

\begin{proof}
Convexity of the exponential gives $e^t-1-t\ge0$. Since
$\sum_jp_je^{t_j}=1$, the weights sum to
$-\sum_jp_jt_j=\operatorname{KL}(p\Vert p')$.
This bounds the contraction of $R_s$. The positive and negative parts of
$p'-p$ each have mass $\operatorname{TV}(p,p')$; their contractions with $b$
differ by at most that mass times the range of $b$.
\end{proof}
\end{samepage}
For multiple heads, apply the bound to each head with its own output projection
and sum the bounds. The measured answer-margin prediction below retains a
baseline downstream gradient; \eqref{eq:localcertificate} certifies its local
attention component.

The bound also certifies local score orderings. For candidate $c$, let
$S_c=\langle g,\Delta y_cW_o\rangle$ be the exact local score,
$\widehat S_c=\langle g,(L_{s,c}+V_{0,c})W_o\rangle$ its first-order
counterpart, and $B_c$ a bound from \eqref{eq:localcertificate}. Then
\begin{equation}
 \widehat S_a-\widehat S_b>B_a+B_b\quad\Longrightarrow\quad S_a>S_b.
 \label{eq:certifiedlocalranking}
\end{equation}
Indeed, $S_a\ge\widehat S_a-B_a>\widehat S_b+B_b\ge S_b$.
Likewise, $|\widehat S_c|>B_c$ certifies its sign. These statements concern
the local scores under a fixed readout; they do not bound the remaining
downstream nonlinearity. Evaluating the bound is not assumed cheaper than
evaluating the exact response. Hard deletion uses its separate
renormalisation formula below, rather than a finite-logit KL certificate.

\subsection{Sparse evaluation of joint cache edits}
Suppose scores change only on $S$ and values only on $T$. With
$r_j=p_j\delta_j\rho(\delta_j)$ for $j\in S$, the exact joint response is
\begin{equation}
 y'-y=
 \frac{\displaystyle
  \sum_{j\in S}r_j(v_j-y)
  +\sum_{j\in T}p_j\varepsilon_j
  +\sum_{j\in S\cap T}r_j\varepsilon_j}
 {\displaystyle 1+\sum_{j\in S}r_j}.
 \label{eq:sparsejoint}
\end{equation}
Baseline probabilities, the readout, and candidate-group complement statistics
can be shared across interventions. Each additional evaluation uses the changed
entries. In particular,
$\operatorname{KL}(p\Vert p')=\log D-\sum_{j\in S}p_j\delta_j$ and
\begin{equation}
 2\operatorname{TV}(p,p')
 =\sum_{j\in S}p_j\left|e^{\delta_j}/D-1\right|
  +\left(1-\sum_{j\in S}p_j\right)\left|D^{-1}-1\right|.
 \label{eq:sparsetv}
\end{equation}
For evaluation, the denominator is the positive sum
$\sum_{j\notin S}p_j+\sum_{j\in S}p_je^{\delta_j}$, computed in log space
with cached complement mass. Near complete attention concentration, the
corresponding complement-weighted value sum also avoids subtracting two
nearly equal readouts. These formulas support exact joint group evaluation
and local error bounds from the same baseline capture.

\subsection{From a positional edit to an answer-margin prediction}
For a selected RoPE frequency band $b$, let $A_b$ retain its blocks of $A$ and set
the other blocks to zero. Shifting a rotated key by $\delta$ gives
\begin{equation}
 \widetilde k'_j=\widetilde k_j e^{\delta A_b\T},\qquad
 s'_j-s_j=\frac{\widetilde q\,\delta A_b\rho(\delta A_b)
                         \widetilde k_j\T}{\tau}.
 \label{eq:phasescore}
\end{equation}
Together, \eqref{eq:phasescore} and \eqref{eq:finiteincrement} retain both the
finite rotary displacement and its softmax response. The positional-Jacobian
comparator uses
\begin{equation}
 \Delta y_{\mathrm{Jac}}=\sum_j p_j\delta_j^{\mathrm{lin}}(v_j-y),\qquad
 \delta_j^{\mathrm{lin}}=\frac{\widetilde q\,\delta A_b\widetilde k_j\T}{\tau}
 \label{eq:phasejacobian}
\end{equation}
on the shifted keys, with $\delta_j^{\mathrm{lin}}=0$ on the others.

Let $o_\ell=\sum_hy_{\ell h}W_{o,\ell h}$ be the combined attention write at the
intervention site. For the two answer labels, fix the original model's preferred
label $r_*$ and define the signed logit margin
$m=\operatorname{logit}(r_*)-\operatorname{logit}(1-r_*)$.
With $g_\ell=\nabla_{o_\ell}m$ at the original forward pass, compare
\begin{equation}
 \widehat{\Delta m}_{\mathrm{fin}}=\langle g_\ell,\Delta o_{\ell,\mathrm{fin}}\rangle,
 \qquad
 \widehat{\Delta m}_{\mathrm{Jac}}=\langle g_\ell,\Delta o_{\ell,\mathrm{Jac}}\rangle.
 \label{eq:marginprediction}
\end{equation}
The local increment in the first predictor is exact. Both predictors use the same
baseline gradient to propagate that increment through the remaining network.
The experimental target is the measured margin change $\Delta m=m'-m$ after
executing the edit and continuing the model forward.

\subsection{Additive rotary scores and coupled softmax responses}
Fix a query, its attended values, and a set of keys to be shifted. For a band $b$,
let $d^{b}\in\R^N$ be the exact score displacement in
\eqref{eq:phasescore}, with zero displacement on the unshifted keys. Write
$F(s)=\sum_jp_j(s)v_j$ for the head readout as a function of its scores.

\begin{proposition}[Softmax interaction between disjoint rotary bands]
For disjoint rotary bands $b,c$, their joint score displacement satisfies
$d^{b\cup c}=d^b+d^c$. The corresponding output interaction is
\begin{equation}
\begin{aligned}
 \mathcal H_{bc}
 &=F(s+d^b+d^c)-F(s+d^b)-F(s+d^c)+F(s)\\
 &=\int_0^1\!\int_0^1
 D^2F(s+t d^b+z d^c)[d^b,d^c]\,\dd t\,\dd z.
\end{aligned}
\label{eq:bandinteraction}
\end{equation}
For score directions $a,b\in\R^N$, the mixed derivative is
\begin{equation}
 D^2F(s)[a,b]
 =\mathbb E_{p(s)}\!\left[
 (a-\mathbb E_{p(s)}a)(b-\mathbb E_{p(s)}b)(v-F(s))\right].
 \label{eq:softmaxmixed}
\end{equation}
\end{proposition}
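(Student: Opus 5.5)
The statement has three parts: additivity of the band displacements, a double-integral form of the interaction, and a formula for the mixed softmax derivative. I would prove them in that order.

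\textbf{Additivity.} Write $A_b$ and $A_c$ for the block-diagonal generators supported on the two bands. Disjoint support gives $A_bA_c=A_cA_b=0$, so the two commute. Hence $e^{\delta(A_b+A_c)}=e^{\delta A_b}e^{\delta A_c}$, and in fact
\[
e^{\delta(A_b+A_c)}-I=(e^{\delta A_b}-I)+(e^{\delta A_c}-I),
\]
because the product of the two increments vanishes blockwise. By \eqref{eq:matrixrho}, $\delta A_{b\cup c}\rho(\delta A_{b\cup c})=\delta A_b\rho(\delta A_b)+\delta A_c\rho(\delta A_c)$. Inserting this into \eqref{eq:phasescore}, which is linear in that operator, gives $d^{b\cup c}_j=d^b_j+d^c_j$ for every shifted key. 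Unshifted keys have zero displacement in all three cases.

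\textbf{Integral form.} $F$ is smooth (real-analytic) in $s$. Put $\Phi(t,z)=F(s+td^b+zd^c)$. The four-cell expression equals $\Phi(1,1)-\Phi(1,0)-\Phi(0,1)+\Phi(0,0)$. Applying the fundamental theorem of calculus in $t$ and then in $z$ gives
\[
\int_0^1\!\!\int_0^1\partial_t\partial_z\Phi\,\dd t\,\dd z .
\]
The chain rule identifies $\partial_t\partial_z\Phi$ with $D^2F(s+td^b+zd^c)[d^b,d^c]$.

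\textbf{Mixed derivative.} This is the main computation.
\begin{itemize}
\item First derivative: the softmax Jacobian gives $\partial_{s_k}p_j=p_j(\mathbf 1_{jk}-p_k)$. Hence
\[
DF(s)[b]=\sum_jp_j(b_j-\mathbb E_pb)v_j=\mathbb E_p[(b-\mathbb E_pb)(v-F)],
\]
the last step using that centred weights sum to zero.
\item Differentiating in direction $a$ produces three contributions. Differentiating $p_j$ gives $\mathbb E_p[(a-\mathbb E_pa)(b-\mathbb E_pb)(v-F)]$. Differentiating $\mathbb E_pb$ gives $-\mathrm{Cov}_p(a,b)\,\mathbb E_p[v-F]=0$. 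Differentiating $F$ gives $-DF[a]\,\mathbb E_p[b-\mathbb E_pb]=0$.
\item Only the first contribution survives. This matches \eqref{eq:softmaxmixed} and is visibly symmetric in $a$ and $b$.
\end{itemize}

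The only real obstacle is keeping the centring bookkeeping straight, so that the two vanishing terms are correctly recognised as zero. Otherwise the argument is routine.
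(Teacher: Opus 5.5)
Your proposal is correct and follows the paper's proof. The paper uses the same three steps: disjoint block support gives $e^{\delta(A_b+A_c)}-I=(e^{\delta A_b}-I)+(e^{\delta A_c}-I)$, hence score additivity; the fundamental theorem of calculus in each score direction gives the double integral; and differentiating $p_j(\mathbf 1_{j=k}-p_k)$ twice through $F$ gives the mixed derivative. You spell out details the paper leaves implicit, such as the vanishing product of the two increments and the two centred terms that drop out of the second derivative.
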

\begin{proof}
Disjoint block supports give
$e^{\delta(A_b+A_c)}-I=(e^{\delta A_b}-I)+(e^{\delta A_c}-I)$.
Taking the query--key dot product proves score additivity. Applying the
fundamental theorem of calculus in each score direction proves
\eqref{eq:bandinteraction}. Differentiating
$\partial p_j/\partial s_k=p_j(\mathbf 1_{j=k}-p_k)$ twice through
$F$ yields \eqref{eq:softmaxmixed}.
\end{proof}

All four readouts retain the exact rotations. Thus
\eqref{eq:bandinteraction} measures finite coupling through softmax between
independent rotary score components. Output projection and summing heads
preserve this decomposition by linearity.

\subsection{Exact selection of cache entries for precision repair}
For a single-token restoration, let $\delta_j$ be the score change from restoring
its key and $\varepsilon_j$ the change from restoring its value. With all other
entries held fixed, \eqref{eq:finiteincrement} reduces to
\begin{equation}
 \Delta_j y=\frac{p_j}{1+p_j\delta_j\rho(\delta_j)}
 \left[\delta_j\rho(\delta_j)(v_j-y)+e^{\delta_j}\varepsilon_j\right].
 \label{eq:singlerepair}
\end{equation}
An equivalent implementation computes the new mass
$p'_j=\operatorname{sigmoid}(\log p_j-\log(1-p_j)+\delta_j)$ and the complement
average $y_{-j}=\sum_{k\ne j}p_kv_k/(1-p_j)$, giving
$\Delta_jy=(p'_j-p_j)(v_j-y_{-j})+p'_j\varepsilon_j$.
For dominant tokens, the complement mass and average are recomputed with
log-sum-exp over the other tokens.

Let $e=o_{\mathrm{corrupt}}-o_{\mathrm{original}}$ be the projected-write error.
For candidate $j$, sum its head contributions before evaluating
\begin{equation}
 B_j=\norm{e}_2^2-\norm{e+\Delta_j o}_2^2,\qquad
 j_*\in\operatorname*{arg\,max}_{j\ \mathrm{not\ yet\ restored}} B_j.
 \label{eq:repairbenefit}
\end{equation}
The exact greedy procedure restores $j_*$ and recomputes all candidate benefits
at the new cache state. A restoration replaces the key and value for that physical
token across all grouped-query attention KV groups. The original cache supplies
the restoration values and the target write.

\section{Pretrained effective-weight reconstruction}
\label{sec:reconstruction}
We evaluate Qwen2.5-0.5B \cite{qwen2024}, layer 23 (zero-indexed), on $C=16$ article
prefixes of 511 tokens, all 14 query heads, and $Q=32$ alternative one-token
continuations at position 511. Continuation $0$ is the observed next token; the other
31 are the model's highest-ranked distinct nonspecial alternatives. Each continuation
reads the same prefix bank. Projection biases and the model's pre-normalised head
inputs are included. Reconstruction is checked both for the prefix-only readout and
for the full causal readout including the continuation's own key/value.

For the latter, if the self token receives mass $\pi_i$, the exact relation is
$y_i^{\mathrm{full}}=(1-\pi_i)y_i^{\mathrm{prefix}}+\pi_i v_i^{\mathrm{self}}$.
The full-bank version of \eqref{eq:identity} uses its own scores and uniform mean.
The 240 reconstruction rows comprise 224 individual head/context pairs and 16
combined projected writes. All pass the reported checks. Combined prefix and full
reconstruction MSEs in float64 are approximately $9.8\times10^{-30}$ and
$9.6\times10^{-30}$. Against the native FP32 combined write, the MSE is
$7.01\times10^{-14}$ and the maximum absolute difference is $1.4\times10^{-5}$.
The float64 figures measure agreement between algebraically equivalent evaluations;
the native comparison measures agreement at the model's execution precision.

The exact tokenwise correction in \eqref{eq:tokenerror} also expresses the
change between a reference effective matrix and the matrix for a new query.
Adding that correction to the reference readout recovers the exact attention
output. Together, the projection-coordinate and cache-coordinate evaluations
verify the construction on the pretrained architecture.

\section{Joint key--value attribution in pretrained attention}
\label{sec:jointexperiment}
We test whether the exact local response predicts executed downstream changes
when contracted with one baseline gradient. The fixed supplemental sweep covers
six model/task/layer settings and 5,120 executed cache interventions. It also
measures the interaction in \eqref{eq:jointmatrix}, sensitivity to edit strength,
and the cost of sparse evaluation. Appendix~\ref{app:jointinitial} retains the
initial single-layer study.

\subsection{Paired prompts and executed cache interventions}
We use frozen Qwen2.5-0.5B-Instruct at zero-indexed layer 18 and
Qwen2.5-1.5B-Instruct \cite{qwen15b2024} at layers 14 and 21. Each model/layer
combination is evaluated on entity--label retrieval and SST-2 sentiment prompts
\cite{socher2013}. All six settings, 32 prompt pairs per task, and eight candidate
spans per prompt are fixed before model evaluation. The same task prompts are
used across the selected model/layer combinations, giving 64 distinct prompt
pairs and 192 setting--pair evaluations. No prompt is filtered by model accuracy
or intervention outcome.

A retrieval donor cyclically permutes entity identifiers and flips their A/B
labels. An SST-2 prompt contains eight balanced labelled training examples and
a held-out validation query; its donor flips the demonstration labels while
preserving the review text and query. Native tokenisation checks equal endpoint
lengths, aligned intervention spans, and unchanged token IDs outside those spans.
Retrieval prompts have 112 tokens; SST-2 prompts have 204--300 tokens, including
the native chat template. Appendix~\ref{app:jointsupplement} records the pinned
checkpoints, prompt construction, and execution protocol.

For every span, we execute donor key-only, value-only, and joint replacements
across all physical KV groups at the selected layer. Each continuation receives
an independent copy of the baseline prefix cache. The baseline query, its self
entry, and all unpatched entries are retained. Four patched continuations share
a batch with one unpatched control. The target is the fixed answer margin
$m=\operatorname{logit}(A)-\operatorname{logit}(B)$; the measured effect subtracts
the control margin from the patched margin in the same batch. Full replacement
gives $6\times32\times8\times3=4{,}608$ patched continuations. At 1.5B retrieval
layer 21, joint edits of strengths $\alpha=0.1$ and $0.5$ add 512 continuations,
for 5,120 in total.

\subsection{Matched predictors from the finite-change decomposition}
At the selected layer $\ell$, let $\mathcal O$ project and sum the head outputs. Define the common
linear readout $\mathcal G(a)=\langle g_{\ell},\mathcal O(a)\rangle$, using one
baseline gradient for all candidates and predictors. Equation~\eqref{eq:jointmatrix}
then supplies the proposed prediction
\begin{equation}
 \widehat{\Delta m}_{\mathrm{joint}}
 =\mathcal G\!\left((\mu'-\mu)+u\T(\DM'-\DM)\right)
 =\mathcal G(K_f+V_0+C_{KV}).
 \label{eq:jointpredictor}
\end{equation}
The change in the uniform mean is included. Computation uses the factored
cache-coordinate response in \eqref{eq:sparsejoint}, with positive group and
complement masses evaluated in log space.

The score/value first-order predictor uses $\mathcal G(L_s+V_0)$. The separate
key/value predictor uses $\mathcal G(K_f+V_0)$, combining the exact grouped
fixed-value key response with the value response at baseline probabilities.
This uses the fixed-value attention correction associated with AtP$^*$
\cite{kramar2024}. A quadratic interaction comparison retains the same exact
key response and adds
\begin{equation}
 C_2=\sum_jp_j\left(\delta_j-\sum_kp_k\delta_k\right)\varepsilon_j,
 \qquad
 \widehat{\Delta m}_{\mathrm{quad}}=\mathcal G(K_f+V_0+C_2).
 \label{eq:quadraticinteraction}
\end{equation}
Along the joint path $(\alpha\delta,\alpha\varepsilon)$,
$C_{KV}(\alpha)=\alpha^2 C_2+O(\alpha^3)$. Thus this comparison retains the
leading mixed term. All predictors evaluate the same complete span and share
the same gradient; none fits a parameter. Direct dense attention recomputation,
followed by $\mathcal G$, supplies an independent equality control for the
factored joint response.

\subsection{Prediction accuracy and influential-span recovery}
For prompt pair $u$ and candidate span $j$, write
$e_{uj}^{a}=|\widehat{\Delta m}_{uj}^{a}-\Delta m_{uj}|$. For each setting, the separate-predictor contrast is
\begin{equation}
 \Delta_{\mathrm{MAE}}
 =\frac1{32}\sum_{u=1}^{32}
   \left[\frac18\sum_{j=1}^{8}
   \left(e_{uj}^{\mathrm{joint}}-e_{uj}^{\mathrm{separate}}\right)\right].
 \label{eq:jointprimary}
\end{equation}
We form its percentile 95\% interval using 5,000 paired bootstrap resamples of
the 32 prompt pairs, keeping all eight joint patches together. Top-two recall
is the overlap between the two spans ranked highest by absolute predicted
effect and the two ranked highest by absolute executed effect, divided by two
and averaged over prompts. Ties use span order. Sign accuracy compares signed
predictions with nonzero executed margin changes. Sign accuracy and top-two
recall are undefined for the zero predictor. We compute the same paired MAE
contrast against the zero, first-order, and quadratic predictors. All intervals
are individual 95\% intervals, without multiplicity adjustment.

Figure~\ref{fig:jointcoverage} shows full-replacement MAE for every setting.
The exact joint predictor has the lowest MAE in all six, reducing error by
75.9--98.5\% against the quadratic interaction correction and by 89.3--99.6\%
against separate key/value attribution. All 24 setting/comparator paired
intervals favor exact joint. The complete means and the separate and quadratic
paired contrasts appear in Tables~\ref{tab:jointcoverage} and
\ref{tab:jointpaired}; the report archive retains all four contrasts.

\begin{figure}[htbp]
\centering
\includegraphics[width=\linewidth]{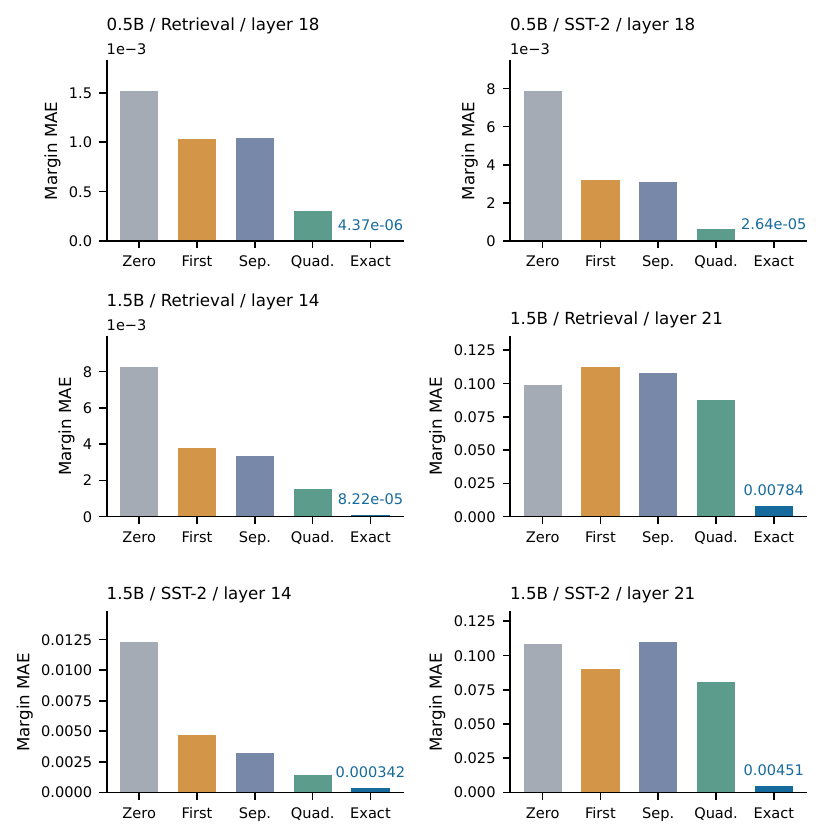}
\caption{Downstream prediction across six fixed settings at full replacement.
Bars plot the reported margin MAEs over 32 prompt pairs with eight joint
span edits each (Table~\ref{tab:jointcoverage}). Axes are scaled separately,
and exact-joint MAEs are labelled. All predictors share the same baseline
downstream gradient; targets are independently executed margin changes.
The quadratic comparator retains the exact key response and leading mixed
term. Paired 95\% intervals are reported in Table~\ref{tab:jointpaired}.}
\label{fig:jointcoverage}
\end{figure}

At 1.5B retrieval layer 21, the exact, separate, and quadratic MAEs are
0.007844, 0.107864, and 0.087683, respectively. At 1.5B SST-2 layer 14, where
the relative gain against the quadratic correction is smallest, they are
0.000342, 0.003194, and 0.001418. Full-replacement top-two recall is
98.44--100\% and sign accuracy is 99.22--100\% for exact joint. The zero
predictor is included throughout: separate attribution exceeds its MAE in both
1.5B layer-21 settings, whereas exact joint improves on zero in every setting.

\subsection{Executed interaction and intervention strength}
Key-only and value-only executions define an additional target:
\begin{equation}
 I_m=(m_{KV}-m_0)-(m_K-m_0)-(m_V-m_0),
 \qquad \widehat I_m=\mathcal G(C_{KV}).
 \label{eq:jointdownstreamfactorial}
\end{equation}
Across the six settings, the signed predicted interaction has Pearson correlation
0.9970--0.99999 with the executed four-cell effect. Interaction MAEs range from
$4.52\times10^{-6}$ to 0.008826, versus $2.96\times10^{-4}$ to 0.092810 for
the quadratic mixed term (Table~\ref{tab:jointfactorial}). The common baseline
gradient propagates each local interaction to a prediction of the independently
executed four-cell effect.

For the strength sweep, rotated keys and values follow
$k'_j=k_j+\alpha(k_j^{\mathrm{donor}}-k_j)$ and
$v'_j=v_j+\alpha(v_j^{\mathrm{donor}}-v_j)$ on the selected span. Prediction and
execution use the same rounded FP32 endpoint tensors. At
$\alpha=0.1,0.5,1.0$, exact-joint MAE is respectively
0.000173, 0.002330, and 0.007844; quadratic MAE is 0.000237, 0.013071,
and 0.087683 (Figure~\ref{fig:jointstrength}). At $\alpha=0.1$, the quadratic
and exact predictors both attain 100\% top-two recall. The small absolute MAE
difference, approximately $-6.34\times10^{-5}$ for exact minus quadratic,
has paired 95\% interval $[-1.09\times10^{-4},-2.88\times10^{-5}]$.
The gap grows with intervention strength, while exact-joint MAE also grows as
the downstream approximation becomes less accurate.

\begin{figure}[htbp]
\centering
\includegraphics[width=\linewidth]{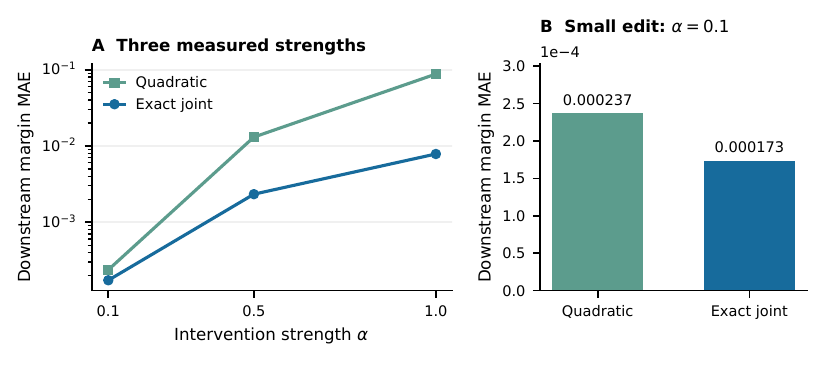}
\caption{Intervention-strength diagnostic on 1.5B retrieval layer 21, using the
same 32 prompt pairs and eight spans at each strength. \textbf{A:} Exact-joint
and quadratic mean downstream errors at the three evaluated strengths;
lines connect measured points on a log scale. \textbf{B:} The same predictors
at $\alpha=0.1$ on a linear scale. The plot uses reported summary means.
The paired 95\% interval for their small-strength difference is
$[-1.09\times10^{-4},-2.88\times10^{-5}]$.}
\label{fig:jointstrength}
\end{figure}

\subsection{Numerical audit and scope}
The supplemental run passes all 14 CPU and 14 CUDA preflight test methods,
including the exact affine-input gradient-step identity, native grouped-query
replay, cache isolation, and finite-difference checks of the projected-write
gradient. Across all 192 setting--pair evaluations, the largest recorded FP64
sparse--dense local discrepancy is $1.31\times10^{-14}$. The maximum difference
between the predicted local contraction and its executed FP32 counterpart is
$2.60\times10^{-6}$. The maximum unpatched control-margin drift is
$1.14\times10^{-5}$; each intervention uses its own batch control. All 5,120
intervention records are present, with no duplicate setting/pair/span/kind/strength
combinations. The smallest downstream MAEs approach the FP32 numerical scale;
they should not be interpreted as exact downstream predictions.

The coverage spans two model sizes within the Qwen family and two prompt tasks.
Baseline A/B accuracy is 46.88\% on retrieval and 59.38\% on SST-2 for 0.5B,
and 96.88\% and 87.50\%, respectively, for 1.5B. These accuracies are reported
as task context; the evaluated quantity is prediction of intervention effects.
The finite-response identity is exact for every permitted head and edit, while
the measured downstream accuracy pertains to these checkpoints, layers, and prompts.

\subsection{Runtime with preparation and reuse}
\label{sec:jointruntime}
We compare sparse evaluation of \eqref{eq:sparsejoint} with batched dense
attention recomputation for the same requested output: one scalar response per
candidate after contraction with the fixed baseline gradient. Both methods
receive identical FP32 inputs and may precontract values with that gradient.
Dense evaluation recomputes the full softmax for every candidate; it batches all
candidates and shares the unchanged value bank. Prefix/model execution and
baseline-gradient acquisition are common inputs and are excluded from both
timings. The measured cost is local response evaluation, rather than full-model
inference latency.

The native timing case uses the first saved 1.5B retrieval layer-21 capture
with 112 tokens and eight seven-token candidate spans. Six synthetic cases
cross $N\in\{128,1024,8192\}$ context tokens with $C\in\{8,256\}$ candidates
and eight-token spans, using 12 query heads, two KV heads, and head width 128.
All cases use an NVIDIA A100-SXM4-40GB, warmed and synchronised timing, and five
rounds with randomised method order. We report medians and interquartile ranges.
The setup-inclusive measurements charge common preparation and, for sparse
evaluation, candidate-group complement preparation. The latter conservatively
scans the bank for each candidate; it is not treated as free preprocessing.

\begin{figure}[htbp]
\centering
\includegraphics[width=\linewidth]{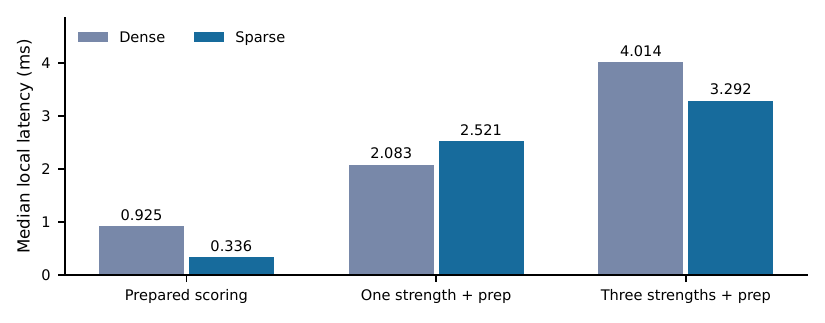}
\caption{Native local-response latency for the same eight candidates on an A100.
Bars show the reported medians over five timing rounds. Scoring-only latency
assumes prepared statistics. The other groups
directly time preparation plus evaluation at one or three strengths; three
strengths share one preparation in both methods. Model execution and gradient
acquisition are excluded throughout. Both dense and sparse methods evaluate
the exact local response.}
\label{fig:jointruntime}
\end{figure}

On the native case, scoring with prepared statistics takes 0.336\,ms for sparse
evaluation and 0.925\,ms for dense recomputation, a $2.75\times$ scoring speedup.
Including preparation, one strength takes 2.521\,ms versus 2.083\,ms, making
sparse evaluation 21.0\% slower. Three strengths sharing preparation take
3.292\,ms versus 4.014\,ms, an 18.0\% latency reduction
($1.22\times$ speedup). Across all seven cases, one-strength sparse evaluation
is 18.2--21.0\% slower, while three-strength reuse reduces latency by
18.0--21.8\% ($1.22$--$1.28\times$; Table~\ref{tab:jointruntimes}). Thus the
implementation benefits from reusing group statistics, with a measured
preparation cost. These results describe this eager implementation and hardware;
they do not establish a speed advantage over fused attention kernels.

\section{Positional prediction, decomposition, and precision repair}
\label{sec:positionalapplications}
\subsection{Protocol, tasks, and statistical units}
We evaluate the same frozen Qwen2.5-1.5B-Instruct checkpoint as
Section~\ref{sec:jointexperiment}, with its chat template,
FP32 eager attention, and grouped-query attention (12 query heads, 2 KV groups).
Calibration examines layers $\{0,7,14,21,27\}$, using 32 disjoint prompt sets from
each of four families: decisive-attribute inference, corrupted category mapping,
ordered register assignments, and SST-2 sentiment. The shared calibration policy
is fixed before the test evaluations reported here.

The held-out evaluation comprises 256 prompt sets per task, 768 in total:
\emph{Corrupt} asks for a two-category mapping from eight demonstrations with one
flipped label; \emph{Ordered} asks for the last assignment to a target register;
\emph{SST-2} uses eight training-set demonstrations, one label flipped, and a
held-out sentiment query from the validation split \cite{socher2013}.
SST-2 query examples and demonstration supports are sampled without reuse across
prompt sets, including calibration versus test. Each prompt set has four
demonstration orders and a fifth variant that appends 16 repetitions of an
irrelevant note before the query. Ordered-task labels follow the actual assignment
order. Calibration and test use separate deterministic streams with seed 24117.

Prediction errors and sign agreements are averaged within each prompt set and then
across the 256 sets for each task. Paired comparisons use the same prompt sets for
both methods. Reported 95\% intervals are pointwise percentile intervals. Positional
prediction summaries use 5,000 paired prompt-level bootstrap resamples;
frequency-interaction and repair summaries use 2,000. Multiple interventions and prompt
variants remain within their original prompt cluster. Answer NLL and KL are
computed on the model distribution conditional on the two labels A/B.

\subsection{Predicting the effects of finite positional interventions}
At layers $\{0,14,21\}$ retained by the shared calibration policy, we apply shifts
of 16 and 128 positional units to the cached keys of the demonstration tokens.
The 64 rotary planes are sorted by frequency and partitioned into bands of
22 fast, 21 middle, and 21 slow planes; an additional intervention shifts all
planes. Values, causal masks, and the current query are fixed at the edited
readout. Each edit is executed through the remaining model to measure its
downstream effect. The design supplies $5\times3\times2\times4=120$
interventions per prompt set and 30,720 per task. All predictors use the same
interventions and baseline downstream gradient.

Table~\ref{tab:phaseprediction} compares \eqref{eq:marginprediction} against
the measured answer-margin changes. The zero predictor has
$\widehat{\Delta m}=0$, so its MAE is the mean magnitude of the observed response.

\begin{table}[htbp]
\centering\small
\caption{Prediction of executed positional edits on 256 held-out prompt sets per task.
MAE is in downstream logit-margin units. Reductions compare finite and zero
MAE; brackets give paired 95\% prompt-bootstrap intervals for the percentage reduction.}
\label{tab:phaseprediction}
\setlength{\tabcolsep}{6pt}
\begin{adjustbox}{max width=\linewidth}
\begin{tabular}{lrrrr}
\toprule
Task & Finite MAE & Jacobian MAE & Zero MAE & Reduction vs.\ zero\\
\midrule
Corrupt & 0.02808 & 0.16028 & 0.05724 & 50.9\% [48.2, 53.7]\\
Ordered & 0.05786 & 0.21897 & 0.09073 & 36.2\% [35.0, 37.5]\\
SST-2 & 0.03585 & 0.15426 & 0.06681 & 46.3\% [43.1, 49.9]\\
\bottomrule
\end{tabular}
\end{adjustbox}
\end{table}

Finite prediction reduces MAE by 36.2--50.9\% relative to zero and
73.6--82.5\% relative to the positional Jacobian. Finite-predictor sign accuracy
is 96.52\% on Corrupt, 95.36\% on Ordered, and 96.40\% on SST-2; corresponding
Jacobian accuracies are 75.01\%, 74.32\%, and 73.12\%.
Paired finite-minus-zero MAE intervals are
$[-0.03040,-0.02789]$, $[-0.03401,-0.03174]$, and
$[-0.03291,-0.02906]$, respectively.

\subsection{Frequency, displacement, and layer resolve the response}
The angular displacement of plane $r$ is $\theta_r=\delta\omega_r$.
The saved model frequencies give the following scales:

\begin{table}[htbp]
\centering\small
\caption{Maximum absolute rotary angle in each frequency band, in radians.}
\label{tab:angles}
\begin{adjustbox}{max width=\linewidth}
\begin{tabular}{lrr}
\toprule
Band & Shift 16 & Shift 128\\
\midrule
Fast & 16.000 & 128.000\\
Middle & 0.1386 & 1.1084\\
Slow & 0.001489 & 0.011911\\
\bottomrule
\end{tabular}
\end{adjustbox}
\end{table}

At middle-band shift 16, the Jacobian reduces zero-predictor MAE by
94.0--94.8\%. The finite response further reduces its remaining error by
72.9--85.9\% (Table~\ref{tab:middleprediction}). At shift 128 the middle-band
response has mean magnitude 0.028--0.039; the finite predictor reduces Jacobian
error by 80.2--92.0\%. Each row averages 3,840 interventions from 256 prompts.

\begin{table}[htbp]
\centering\small
\caption{Middle-band prediction across two finite displacements. Absolute errors
and the zero baseline specify the response scale; the final column compares
finite and Jacobian MAE.}
\label{tab:middleprediction}
\setlength{\tabcolsep}{6pt}
\begin{adjustbox}{max width=\linewidth}
\begin{tabular}{lrrrrr}
\toprule
Task & Shift & Finite MAE & Jacobian MAE & Zero MAE & Reduction\\
\midrule
Corrupt & 16 & 0.000064 & 0.000236 & 0.004576 & 72.9\%\\
Ordered & 16 & 0.000057 & 0.000402 & 0.007160 & 85.9\%\\
SST-2 & 16 & 0.000040 & 0.000241 & 0.004042 & 83.4\%\\
\midrule
Corrupt & 128 & 0.003543 & 0.017862 & 0.031212 & 80.2\%\\
Ordered & 128 & 0.002239 & 0.027925 & 0.038921 & 92.0\%\\
SST-2 & 128 & 0.002112 & 0.016428 & 0.027545 & 87.1\%\\
\bottomrule
\end{tabular}
\end{adjustbox}
\end{table}

The paired 95\% intervals for the shift-16 reductions over the Jacobian are
$[71.1,74.6]\%$, $[85.4,86.3]\%$, and $[82.0,84.7]\%$ on Corrupt, Ordered,
and SST-2. In the slow band at shift 16, both predictors have MAE between
$4\times10^{-6}$ and $6\times10^{-6}$ at native FP32 execution precision, exhibiting their
agreement in the small-angle regime. The full band-by-shift means appear in
Appendix~\ref{app:bandstrata}.

Layer stratification further locates accurate downstream propagation. At layer
14, finite prediction reduces zero-predictor error by 80.1\%, 90.9\%, and
86.0\% on Corrupt, Ordered, and SST-2; at layer 21 the reductions are
94.7\%, 93.0\%, and 92.0\%. These summaries use every recorded band and shift
at the stated layer.

To describe responses with larger measured magnitude, we also stratify the
recorded interventions by $|\Delta m|\ge0.1$. This yields 4,285 Corrupt,
6,201 Ordered, and 4,195 SST-2 interventions, with all 256 prompts represented
in each task. Finite-predictor MAE is 42.2\%, 26.2\%, and 30.2\% below zero,
and its sign accuracy is 90.9\%, 85.6\%, and 89.0\%, respectively. Each prompt
is weighted equally after averaging its interventions in this response stratum.

\subsection{Measuring softmax coupling between rotary bands}
We evaluate the four local readouts in \eqref{eq:bandinteraction} on the same
pretrained query, cache, demonstration-token mask, and output projection.
The two bands and their union use exact rotations of the original keys.
For prompt $u$ and variant $v$, define the projected interaction and joint effect
\begin{equation}
 H_{uv}^{bc}=\sum_h\mathcal H_{bc,h}W_{o,h},\qquad
 T_{uv}^{bc}=\sum_h\bigl[F_h(s+d^b+d^c)-F_h(s)\bigr]W_{o,h},
 \label{eq:projectedinteraction}
\end{equation}
and report the mean relative interaction norm
\begin{equation}
 \overline\kappa_{bc}=\frac1{256}\sum_u\frac15\sum_{v=1}^{5}
 \frac{\norm{H_{uv}^{bc}}_2}{\norm{T_{uv}^{bc}}_2}.
 \label{eq:interactionmetric}
\end{equation}
Heads are summed after projection and before either norm is taken.

\begin{table}[htbp]
\centering\small
\caption{Fast--middle softmax interaction as a percentage of the combined
projected local response, using \eqref{eq:interactionmetric}. Each cell contains
256 prompt sets and five variants. Brackets give 95\% prompt-bootstrap intervals.}
\label{tab:bandinteraction}
\setlength{\tabcolsep}{9pt}
\begin{adjustbox}{max width=\linewidth}
\begin{tabular}{lrrr}
\toprule
Task & Layer & Shift 16 & Shift 128\\
\midrule
Corrupt & 0 & 9.20 [9.11, 9.28] & 23.92 [23.21, 24.58]\\
Corrupt & 14 & 3.33 [3.30, 3.36] & 18.96 [18.91, 19.01]\\
Corrupt & 21 & 4.78 [4.73, 4.82] & 17.22 [17.06, 17.41]\\
\midrule
Ordered & 0 & 5.31 [5.31, 5.32] & 26.94 [26.86, 27.02]\\
Ordered & 14 & 5.61 [5.59, 5.63] & 19.43 [19.41, 19.45]\\
Ordered & 21 & 4.96 [4.93, 4.99] & 16.20 [16.13, 16.27]\\
\midrule
SST-2 & 0 & 6.81 [6.61, 7.02] & 24.53 [23.82, 25.29]\\
SST-2 & 14 & 3.44 [3.38, 3.50] & 22.97 [22.67, 23.29]\\
SST-2 & 21 & 5.05 [4.90, 5.20] & 25.40 [24.91, 25.91]\\
\bottomrule
\end{tabular}
\end{adjustbox}
\end{table}

Fast--middle interaction grows from 3.33--9.20\% at shift 16 to
16.20--26.94\% at shift 128.
The exact score-additivity proposition identifies these measured interactions
with the mixed softmax response. The readouts quantify how the normalised
attention map couples independent positional score components.

\subsection{Separating positional and contextual changes under reordering}
For the three alternative demonstration orders without a gap, we align tokens
bijectively with the original prompt and construct four projected attention
readouts $o^{ab}$. Index $a$
selects the original or reordered positions; index $b$ selects the corresponding
observed de-rotated Q/K and V. Reapplying the selected rotations gives the four
cells, with $o^{00}$ and $o^{11}$ reproducing the two observed endpoints.
Define
\begin{equation}
\begin{aligned}
 \Phi&=o^{10}-o^{00},\qquad C=o^{01}-o^{00},\\
 J&=o^{11}-o^{10}-o^{01}+o^{00},\qquad
 T=o^{11}-o^{00}=\Phi+C+J.
\end{aligned}
\label{eq:phasecontent}
\end{equation}
Here $C$ includes changes in contextual Q/K/V produced by upstream computation.
For component $X\in\{\Phi,C,J\}$, the pooled signed projection is
$\gamma_X=\sum_r\langle X_r,T_r\rangle/\sum_r\norm{T_r}_2^2$,
where $r$ ranges over the aligned reordered readouts. These directional
contributions satisfy $\gamma_\Phi+\gamma_C+\gamma_J=1$.

\begin{table}[htbp]
\centering\small
\caption{Signed projections of direct phase, contextual-feature, and interaction
components onto the total projected change under actual reorderings. Each
task/layer contains 256 held-out prompt sets.}
\label{tab:phasecontent}
\setlength{\tabcolsep}{10pt}
\begin{adjustbox}{max width=\linewidth}
\begin{tabular}{lrrrr}
\toprule
Task & Layer & Phase $\gamma_\Phi$ & Context $\gamma_C$ & Interaction $\gamma_J$\\
\midrule
Corrupt & 0 & 1.000 & 0.000 & 0.000\\
Corrupt & 14 & 0.099 & 0.935 & -0.034\\
Corrupt & 21 & 0.028 & 0.964 & 0.009\\
\midrule
Ordered & 0 & 1.000 & 0.000 & 0.000\\
Ordered & 14 & -0.024 & 0.945 & 0.079\\
Ordered & 21 & -0.004 & 0.953 & 0.051\\
\midrule
SST-2 & 0 & 1.000 & 0.000 & 0.000\\
SST-2 & 14 & 0.035 & 0.967 & -0.002\\
SST-2 & 21 & 0.020 & 0.976 & 0.004\\
\bottomrule
\end{tabular}
\end{adjustbox}
\end{table}

At layer 0, direct phase accounts for essentially the entire projected
reordering response. At layers 14 and 21, the contextual-feature component
has signed projection 0.935--0.976. The exact decomposition therefore resolves
the transition from direct positional rotation to changes carried by the
representations built through the network.

\subsection{Attributing local cache deletions}
We independently suppress each of the eight demonstration spans at a fixed
attention readout, execute the resulting cache mask, and continue the model to
measure its downstream margin change. For a deleted set $G$, its headwise
attention mass $\alpha_G=\sum_{j\in G}p_j$ and weighted value
$w_G=\sum_{j\in G}p_jv_j$ give the exact finite response
\begin{equation}
 \Delta y_{-G}=\frac{\alpha_G y-w_G}{1-\alpha_G}.
 \label{eq:groupdeletion}
\end{equation}
An immediate cache-management consequence is the checkable constraint
$\norm{\sum_h\Delta y_{-G,h}W_{o,h}}_2\le\epsilon$: the chosen deletion
respects a specified current-query write-distortion budget. General finite
KV edits supply the corresponding criterion for quantization or
restoration. This is a local decision rule; future queries require their
own evaluation. The experiment below evaluates deletion-effect attribution.
Projecting and summing heads, then contracting with the baseline downstream
gradient, predicts the signed margin effect. One baseline backward pass supplies
the gradient for all eight candidate spans. The Jacobian control uses
$\alpha_Gy-w_G$ before the same projection and gradient contraction.
Attention mass and the norm of the projected weighted value supply magnitude
ranking controls.

At layers 0 and 14, we rank the eight candidates by predicted effect magnitude
and compute Spearman correlation with their measured absolute margin changes.
Correlations are averaged over the five variants within a prompt, then over
the 256 held-out prompts. Sign accuracy compares signed predictions with the
executed local interventions.

\begin{table}[htbp]
\centering\small
\caption{Local cache-deletion attribution against executed downstream effects.
The four central columns report magnitude-rank correlations; the final column
reports finite-predictor sign accuracy. Each row uses 256 prompt sets.}
\label{tab:deletionattribution}
\setlength{\tabcolsep}{5pt}
\begin{adjustbox}{max width=\linewidth}
\begin{tabular}{lrrrrrr}
\toprule
Task & Layer & Finite & Jacobian & Attention & Value norm & Sign accuracy\\
\midrule
Corrupt & 0 & 0.989 & 0.980 & 0.570 & 0.542 & 99.49\%\\
Corrupt & 14 & 0.950 & 0.933 & 0.723 & 0.727 & 98.43\%\\
Ordered & 0 & 0.988 & 0.974 & 0.377 & 0.361 & 99.13\%\\
Ordered & 14 & 0.982 & 0.981 & 0.757 & 0.764 & 99.43\%\\
SST-2 & 0 & 0.988 & 0.965 & 0.352 & 0.343 & 99.53\%\\
SST-2 & 14 & 0.977 & 0.974 & 0.701 & 0.714 & 99.15\%\\
\bottomrule
\end{tabular}
\end{adjustbox}
\end{table}

Finite-direction attribution achieves rank correlations of 0.950--0.989 and
98.4--99.5\% sign accuracy. The comparison connects the token-group decomposition
to separately executed model responses and provides a local effect-ranking
instrument alongside positional prediction.

\subsection{Targeted precision repair}
At layer 7, selected by lowest calibration mean relative repair RMSE, we apply
simulated symmetric per-vector 2-bit or
4-bit quantisation to the prefix K and V. For a vector $x$ and bit setting $b$,
the scale is $a=\max_k|x_k|/(2^{b-1}-1)$ and the stored test vector is
$a\,\operatorname{round}(x/a)$, with the zero vector handled separately.
Quantised values are represented in FP32 for the native intervention. The current
query's own key/value is retained at original precision. Repair is evaluated on
the original-order, no-gap variant of each of the 256 test prompt sets per task.

For a budget $k\in\{1,4\}$, exact greedy selection uses
\eqref{eq:repairbenefit} after every restoration. The first-order and second-order
controls expand \eqref{eq:singlerepair} along the joint score/value restoration
path, score candidates once in the initially corrupted state, and restore the
top $k$ entries. Attention mass, projected value norm, random selection, and a
Fisher-style score $\sum_h p_{hj}(1-p_{hj})\delta_{hj}^2$ provide additional
controls. All methods have access to the original cache. Thus the multi-entry
comparison evaluates exact greedy reselection against fixed-score selection;
the one-entry comparison evaluates a single choice for both methods.

For prompt $u$ and repaired combined write $\widehat o_u$, define
\begin{equation}
 r_u=\sqrt{\frac{\norm{\widehat o_u-o_u}_2^2}
                       {\norm{o^{\mathrm{corrupt}}_u-o_u}_2^2}},\qquad
 \overline r=\frac1{256}\sum_{u=1}^{256}r_u.
 \label{eq:repairmetric}
\end{equation}
Here $o_u$ is the original projected write; unrepaired corruption has $r_u=1$.
Heads are projected and summed before squared error is measured. The reported
quantity is the mean of prompt-level relative RMSEs.

\begin{table}[htbp]
\centering\small
\caption{Mean relative projected-write RMSE after restoring one or four prefix entries at layer 7. Each row averages 256 test prompt sets. Finite greedy recomputes benefits after each restoration; second-order scores are fixed at the initial corrupted state. Lower is better.}
\label{tab:precisionrepair}
\setlength{\tabcolsep}{6pt}
\begin{adjustbox}{max width=\linewidth}
\begin{tabular}{llrrrr}
\toprule
& & \multicolumn{2}{c}{One entry} & \multicolumn{2}{c}{Four entries}\\
\cmidrule(lr){3-4}\cmidrule(lr){5-6}
Task & Bits & Finite & Second-order & Finite & Second-order\\
\midrule
Corrupt & 2 & 0.7806 & 0.7838 & 0.5874 & 0.7794\\
Corrupt & 4 & 0.8807 & 0.9258 & 0.6173 & 0.8433\\
Ordered & 2 & 0.7697 & 0.8378 & 0.5932 & 0.7477\\
Ordered & 4 & 0.6811 & 0.6811 & 0.2870 & 0.6367\\
SST-2 & 2 & 0.7811 & 0.7815 & 0.5815 & 0.7835\\
SST-2 & 4 & 0.8073 & 0.8650 & 0.6079 & 0.7584\\
\bottomrule
\end{tabular}
\end{adjustbox}
\end{table}
\FloatBarrier

With one entry restored, both methods make a single choice from the same
corrupted cache and physical-token candidates. Exact selection reduces mean
relative RMSE by 4.86\% on Corrupt at 4 bits, 8.13\% on Ordered at 2 bits,
and 6.67\% on SST-2 at 4 bits. Their paired finite-minus-second-order intervals
are $[-0.05475,-0.03639]$, $[-0.06828,-0.06791]$, and
$[-0.07357,-0.04182]$, respectively. Across all six conditions, the finite
point estimate is lower in five and equal on Ordered at 4 bits.

With four entries restored, exact greedy selection lowers mean relative RMSE
by 19.8--54.9\% against fixed second-order selection across the six
task/precision conditions. All six paired difference intervals are below zero.
The one-entry comparison measures candidate choice, while the four-entry
comparison measures the complete repeated-restoration procedure. Because the
re-scoring schedules differ, the four-entry result does not isolate the benefit
of exact scoring from that of adaptive re-scoring.

\subsection{Numerical validation}
Native FP32 attention replay, physical KV edits, grouped-query layouts, batched
intervention isolation, and downstream gradients are checked before checkpoint
evaluation using tiny Qwen2 and Llama models. Float64 algebra checks and native
FP32 replay checks use separate references; the saved absolute/relative tolerances
are $2\times10^{-10}/2\times10^{-10}$ and
$5\times10^{-5}/5\times10^{-4}$, respectively. The three completed task workers
record 1,166,880 passing checks across their calibration and test stages, with
no failed saved checks. Their model revision, prompt manifest, numerical settings,
and shared calibration-policy hashes agree. Evaluation uses PyTorch
2.11.0+cu128 and Transformers 4.51.3; Corrupt and SST-2 run on A100 40GB and
Ordered on an L4. The complete checkpoints retain predictions, measured effects,
selection indices, per-prompt metrics, and named numerical checks.

\section{A recorded answer-repair case}
\label{app:answerrepair}
The accompanying instrument records one synthetic API-expression repair on
Qwen2.5-0.5B-Instruct, revision
\texttt{7ae557604adf67be50417f59c2c2f167def9a775}, using CPU FP32 eager
attention, PyTorch 2.8.0, and Transformers 4.57.3. Generation is greedy
with a 32-token limit and independent caches. Generated expressions are
mechanically qualified with the same API namespace in every branch before
running the same six fixed executable checks.

The ranking objective is the next-token logit margin for \texttt{legacy}
against \texttt{lookup}, supplied from the known API contract. The candidate
family contains seven odds multipliers, deletion, value zeroing, and six
band/shift combinations ($\pm16$ in fast, middle, and slow bands).
Scoring all token rows and layers gives 109,080 candidates from one captured
baseline and one backward pass, with no candidate-specific model calls for
scoring. The highest-ranked edit multiplies attention odds at prefix token
79, \texttt{\textvisiblespace legacy}, by 64 at layer 21 for the final query.
The predicted margin change is 11.361 and its executed change is 9.399.

\begin{table}[htbp]
\centering\small
\caption{Recorded outcomes on the same six fixed checks. Ordinary generation
and the attention branch use the same prompt. Source emphasis supplies the
full current API source. The installed no-op reproduces ordinary output
token for token. Expressions omit the mechanically added API namespace.}
\label{tab:answerrepair}
\begin{tabular}{llr}
\toprule
Branch & Generated expression & Checks passed\\
\midrule
Ordinary & \texttt{lookup(value)} & 0/6\\
Source emphasis & \texttt{current(value)} & 1/6\\
Installed no-op & \texttt{lookup(value)} & 0/6\\
Ranked attention edit & \texttt{legacy(current(value))} & 6/6\\
\bottomrule
\end{tabular}
\end{table}

The saved confirmation reports matching ordinary and attention prompt token
IDs, an unchanged check, and reproduction of the passing edited branch.
The case was selected adaptively and uses a supplied correct ranking target.
It demonstrates an executed repair under these conditions; it supplies no
success-rate estimate or claim that attention uniquely caused the failure.
The figure-data file retains the source record hash and reported branch
outcomes; full discovery, confirmation, and executable-case records belong
with the instrument's supporting records.

\section{Reference comparison with an independent regression learner}
\label{sec:regressionpilot}
This pilot compares trained prediction functions with the task-learning setup
of von Oswald et al.\ \cite{vonoswald2023}. It characterises the prediction
functions selected by training relative to an independently specified
regression-GD family. The exact gradient-step representation in
\eqref{eq:gradient} is established algebraically for arbitrary projection
weights and holds irrespective of alignment with this reference.

\paragraph{Controlled training protocol}
Following the task-learning comparison of von Oswald et al.\ \cite{vonoswald2023},
we train attention blocks on fresh linear-regression episodes. Each episode draws
$w_*\sim\mathcal N(0,I_5/5)$ and Gaussian inputs $x_j\sim\mathcal N(0,I_5)$,
with targets $t_j=x_j\T w_*$. Sixteen support tokens $[x_j,t_j]$ form the context;
an independent query is encoded as $[x,0]$. The query reads the support bank and
its prediction is the final coordinate of the residual output.

The three architectures are linear attention, softmax attention, and
RoPE--softmax attention. Each uses one residual attention block with two heads
of width 16 and learned query, key, value, and output projections. RoPE uses
base 10,000. We train seeds $0,1,2$ for 2,000 steps with AdamW, learning rate
$0.003$, zero weight decay, batches of 128 fresh episodes, and gradient-norm
clipping at 1. Architecture comparisons use matched training streams.
Checkpoint selection minimises prediction MSE on 128 validation episodes,
evaluated every 200 training steps. Held-out prediction evaluation uses
256 test episodes with eight independent queries per episode.

\paragraph{A separately specified one-step reference}
For support set $D=\{(x_j,t_j)\}_{j=1}^N$, define
\begin{equation}
 \mathcal L_D(w)=\frac1{2N}\sum_{j=1}^N(x_j\T w-t_j)^2,
 \qquad g_D=\frac1N\sum_{j=1}^N x_jt_j,
 \qquad w_1=0-\eta\nabla\mathcal L_D(0)=\eta g_D.
 \label{eq:independentgd}
\end{equation}
The resulting prediction is $f_{\eta,D}(x)=\eta x\T g_D$, and its query
Jacobian is $\nabla_x f_{\eta,D}=\eta g_D$. The regression rule is computed
from support inputs and labels independently of the attention projections,
scores, values, and effective matrices.

For each trained checkpoint, we calibrate one nonnegative scalar against its
validation predictions $f_\theta(D,x)$:
\begin{equation}
 \eta_{\mathrm{align}}=\operatorname*{arg\,min}_{\eta\ge0}
 \sum_{(D,x)\in\mathcal V}
 \bigl[f_\theta(D,x)-\eta x\T g_D\bigr]^2.
 \label{eq:alignedeta}
\end{equation}
This scalar is then fixed for test evaluation. Thus the reported comparison
measures held-out agreement with a validation-aligned GD family. A constructed
linear-attention positive control at $\eta=0.7$ matches the reference prediction
and query Jacobian to floating-point precision, with Jacobian cosine 1.0.

\paragraph{Held-out prediction and sensitivity alignment}
Table~\ref{tab:gdregression} reports every training seed. Prediction disagreement
is the squared difference between attention and the calibrated GD prediction,
averaged over the full test bank and then the three seeds. Jacobian cosines are
computed in float64 on sixteen held-out diagnostic episodes with eight queries
each, using the gradient of the native scalar prediction with respect to the
query's five input coordinates.

\begin{table}[htbp]
\centering\small
\caption{Courtesy regression comparison: trained attention versus the independently
specified one-step GD family, with a validation-calibrated scale. The cosine
columns show per-seed test means; prediction MSE is disagreement with the GD
reference, averaged across all three seeds on the full test bank.}
\label{tab:gdregression}
\setlength{\tabcolsep}{7pt}
\begin{adjustbox}{max width=\linewidth}
\begin{tabular}{lrrrr}
\toprule
& \multicolumn{3}{c}{Query-Jacobian cosine} & Prediction\\
\cmidrule(lr){2-4}
Architecture & Seed 0 & Seed 1 & Seed 2 & disagreement MSE\\
\midrule
Linear attention & 0.9951 & 0.9945 & 0.9974 & 0.005946\\
Softmax attention & 0.9628 & 0.9653 & 0.9632 & 0.026968\\
RoPE--softmax attention & 0.8231 & 0.9639 & 0.9647 & 0.072063\\
\bottomrule
\end{tabular}
\end{adjustbox}
\end{table}

The trained linear models attain cosine 0.994--0.997 with the regression-GD
direction. The ranges are 0.963--0.965 for softmax and 0.823--0.965 for
RoPE--softmax. The comparison describes functional alignment with an external
learner and its seed variability; it does not identify the objective selected
by training. The exact head representation is given by \eqref{eq:gradient}.

\section{Initial single-layer joint-KV experiment}
\label{app:jointinitial}
This earlier experiment uses 114-token retrieval prompts at one model and layer.
The supplemental sweep in Section~\ref{sec:jointexperiment} uses its own saved
112-token retrieval prompts and adds SST-2, another checkpoint size, and another
1.5B layer. The two runs are reported separately. Both use the predictors in
\eqref{eq:jointpredictor}--\eqref{eq:quadraticinteraction} and the prompt-paired
evaluation defined in Section~\ref{sec:jointexperiment}.

\subsection{Paired prompts and executed cache interventions}
We evaluate the interaction in \eqref{eq:jointmatrix} on frozen
Qwen2.5-1.5B-Instruct \cite{qwen15b2024}\footnote{Checkpoint revision:
\texttt{989aa7980e4cf806f80c7fef2b1adb7bc71aa306}.}
at zero-indexed layer 21, with FP32 eager attention, 12 query heads, and two
KV groups. The layer was fixed before this run using the positional-propagation
measurements reported in Section~\ref{sec:positionalapplications}.

The dataset contains 32 paired entity--label retrieval prompts generated with
seed 20260908. Each baseline lists eight records of the form
\texttt{Entity 06: B}, followed by a question asking for one entity's A/B label.
Its donor cyclically permutes entity identifiers and flips the label at every
record position, retaining the question and format. All prompts contain 114
tokens including the chat template. Tokenisation verifies identical record-span
positions at both endpoints. The eight record spans are the intervention
candidates; every pair and every span is evaluated.
Appendix~\ref{app:jointprotocol} gives a complete paired example.

For each span, we execute three patches: donor keys only, donor values only,
and donor keys and values together. Each replacement acts on all physical KV
groups at layer 21. It uses the native rotated keys and native values captured
from the donor prefix, while preserving the baseline query, self-token entry,
and every unpatched cache entry. The remaining query computation is executed
through the model. Throughout this experiment the target is the fixed margin
$m=\operatorname{logit}(A)-\operatorname{logit}(B)$, and the measured effect is
the patched margin minus the unpatched margin in the same counterfactual batch.

The complete design contains 256 joint patches and 512 key-only/value-only
controls. Capture uses 64 prefix forward passes and 32 baseline backward passes.
The 24 patches for each prompt share a counterfactual batch with one unpatched
control, giving 768 patched and 32 control single-query continuations.
Every batch row starts from an independent copy of the baseline cache.

\subsection{Initial prediction and interaction results}
\begin{table}[htbp]
\centering\small
\caption{Joint cache-intervention prediction on 32 paired prompts, eight spans
per prompt, at Qwen2.5-1.5B-Instruct layer 21. The response column is contracted
with the same baseline gradient after projecting and summing heads. MAE is in
answer-logit-margin units; recall and sign accuracy are percentages.
The zero predictor has undefined ranking and sign scores.}
\label{tab:jointattribution}
\setlength{\tabcolsep}{5pt}
\begin{adjustbox}{max width=\linewidth}
\begin{tabular}{llrrr}
\toprule
Predictor & Local response & MAE & Top-two recall & Sign accuracy\\
\midrule
Zero & $0$ & 0.079689 & --- & ---\\
Score/value first order & $L_s+V_0$ & 0.075484 & 64.06 & 81.25\\
Separate key/value & $K_f+V_0$ & 0.081529 & 65.63 & 83.98\\
Quadratic interaction & $K_f+V_0+C_2$ & 0.056043 & 68.75 & 85.16\\
\textbf{Exact joint} & $K_f+V_0+C_{KV}$ & \textbf{0.004075} & \textbf{96.88} & \textbf{99.22}\\
Dense equality control & $y'-y$ & 0.004075 & 96.88 & 99.22\\
\bottomrule
\end{tabular}
\end{adjustbox}
\end{table}

Including $C_{KV}$ reduces MAE by 95.0\% against separate key/value attribution
and by 92.7\% against the quadratic interaction correction
(Table~\ref{tab:jointattribution}). The primary difference is $-0.077455$,
with paired 95\% interval $[-0.093958,-0.062308]$. The joint-minus-quadratic
difference is $-0.051969$, with interval $[-0.065922,-0.039547]$.
Every prompt pair improves against both comparisons.
Figure~\ref{fig:jointresults}A displays the aggregate errors.
The exact joint predictor recovers 62 of the
64 top-two span slots and predicts the sign of 254 of the 256 effects.

\begin{figure}[htbp]
\centering
\includegraphics[width=\linewidth]{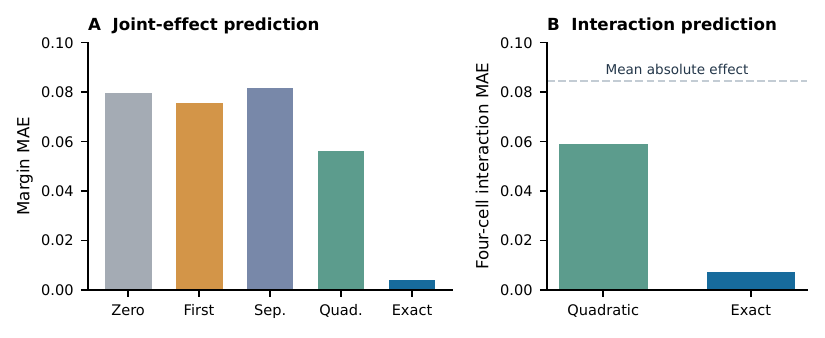}
\caption{Summary of the initial single-layer experiment. \textbf{A:} Reported
mean downstream prediction errors (Table~\ref{tab:jointattribution}).
\textbf{B:} Mean errors for predicting the executed four-cell interaction
with $\mathcal G(C_2)$ and $\mathcal G(C_{KV})$; the dashed line is the
mean absolute executed interaction. Bars are aggregate summaries over the
32 prompts and eight spans.}
\label{fig:jointresults}
\end{figure}

\subsection{The added term tracks an independently executed interaction}
The key-only and value-only controls isolate the two constituent responses.
For these controls $C_{KV}=0$, and the joint and separate predictors coincide:
their downstream MAEs are 0.005767 for key-only patches and 0.002422 for
value-only patches. Combining both edits introduces the interaction measured
in Table~\ref{tab:jointattribution}.

The executed margins supply the four-cell target in \eqref{eq:jointdownstreamfactorial}.
Across the 256 spans, $\widehat I_m$ has Pearson correlation 0.9829 with
$I_m$ and MAE 0.007424. Figure~\ref{fig:jointresults}B compares interaction
errors. The mean absolute
executed interaction is 0.084483; the quadratic prediction $\mathcal G(C_2)$
has interaction MAE 0.059019. These measurements connect the improvement
to the term identified algebraically: reallocating attention changes the effect
of a simultaneous value edit. The instrument provides separate signed key,
value, and interaction contributions for each candidate span.

\subsection{Numerical audit and reproducibility}
All 9,504 recorded intervention checks and 1,729 preflight checks pass. Independent
FP64 replay from the saved native tensors reproduces every joint prediction and
interaction score within $3\times10^{-15}$. Against the executed FP32 projected
write, the gradient-contracted local response differs by at most
$2.6\times10^{-6}$. The largest change in the unpatched margin between
single-query and counterfactual-batch execution is $4.6\times10^{-5}$; each
measured intervention uses its own batch's unpatched control.
Prompt texts, token spans, native Q/K/V, gradients, output projections, all
predictors, executed margins, and per-prompt summaries are retained for offline
replay. Appendix~\ref{app:jointprotocol} specifies the reproducible run.

\clearpage
\section{Supplemental joint-KV protocol and full results}
\label{app:jointsupplement}
\subsection{Checkpoints, prompts, and native execution}
The supplemental run uses runner version 1.0.1, seed 20260908,
PyTorch 2.11.0+cu128, Transformers 4.51.3, and CUDA 12.8 on an
NVIDIA A100-SXM4-40GB. Execution uses FP32 eager attention in evaluation mode,
frozen parameters, disabled TF32, and native RoPE at base $10^6$. The 0.5B
instruct checkpoint has 14 query heads, two KV heads, and head width 64;
the 1.5B instruct checkpoint has 12 query heads, two KV heads, and width 128.
The pinned checkpoint revisions are, respectively,
\texttt{7ae557604adf67be50417f59c2c2f167def9a775} and
\texttt{989aa7980e4cf806f80c7fef2b1adb7bc71aa306}.

The SST-2 snapshot is \texttt{stanfordnlp/sst2}, revision
\nolinkurl{8d51e7e4887a4caaa95b3fbebbf53c0490b58bbb}. Each query comes from the
validation split and each of its eight demonstrations from the training split,
balanced four per class. Query lengths are restricted to 20--320 characters and
demonstration lengths to 20--160 characters before tokenisation; normalised
duplicates within a prompt are excluded. The mapping is A for negative and B
for positive. Retrieval uses the entity permutation and label-flip rule in
Appendix~\ref{app:jointprotocol}, with its own saved 112-token chat prompts.
Both tokenizers validate all 64 prompt pairs before the sweep. The report
archive includes the exact texts, source row IDs, token spans, and their hashes.

The final query reads a captured native prefix cache. The runner clones that
cache for every counterfactual, patches native rotated keys and values, and
executes the remaining query computation. A leaf at the attention output
projection preserves the forward value while obtaining the baseline margin
gradient with respect to that projected write. Every batch includes an unpatched
control. Fractional edits save the actual rounded endpoint tensors used in both
prediction and execution. Completed setting--pairs are committed atomically
with hashes and the run fingerprint; the complete run records 192 such pairs.

The native checks compare full and cached continuation, unpatched controls,
head readouts, projected writes, and gradient contractions. FP64 algebra and
native FP32 comparisons use separate tolerances and references. The local
algebra tests also reconstruct the affine-input gradient-step identity directly.
The report archive contains all 5,120 scalar intervention records, 192 numerical
audit rows, per-prompt summaries, the CPU/CUDA gate records, and the timing
measurements. Native per-pair tensor captures are stored separately for offline
replay; the report archive alone does not contain those tensors.

\subsection{All full-replacement predictor means}
\begin{table}[htbp]
\centering\small
\caption{Supplemental full-replacement downstream margin MAE. Each row evaluates
32 prompt pairs and eight joint interventions per pair. Model sizes denote
Qwen2.5-Instruct checkpoints; layers are zero-indexed. Dense equality-control
MAEs coincide with exact joint to the displayed precision.}
\label{tab:jointcoverage}
\setlength{\tabcolsep}{4pt}
\begin{adjustbox}{max width=\linewidth}
\begin{tabular}{lllrrrrr}
\toprule
Model & Task & Layer & Zero & First order & Separate & Quadratic & Exact joint\\
\midrule
0.5B & Retrieval & 18 & 0.001521 & 0.001028 & 0.001042 & $2.98\times10^{-4}$ & $4.37\times10^{-6}$\\
0.5B & SST-2 & 18 & 0.007884 & 0.003183 & 0.003099 & $5.99\times10^{-4}$ & $2.64\times10^{-5}$\\
1.5B & Retrieval & 14 & 0.008275 & 0.003796 & 0.003342 & 0.001525 & $8.22\times10^{-5}$\\
1.5B & Retrieval & 21 & 0.098962 & 0.112774 & 0.107864 & 0.087683 & 0.007844\\
1.5B & SST-2 & 14 & 0.012270 & 0.004656 & 0.003194 & 0.001418 & $3.42\times10^{-4}$\\
1.5B & SST-2 & 21 & 0.108527 & 0.089914 & 0.109616 & 0.080700 & 0.004510\\
\bottomrule
\end{tabular}
\end{adjustbox}
\end{table}

\begin{table}[htbp]
\centering\footnotesize
\caption{Prompt-paired exact-minus-comparator MAE differences at full replacement.
Negative values favor exact joint. Intervals are individual 95\% percentile
intervals from 5,000 resamples of the 32 prompt pairs (seed 981), retaining all
eight spans. All 12 displayed intervals and the 12 additional intervals against
zero and first order lie below zero.}
\label{tab:jointpaired}
\setlength{\tabcolsep}{3pt}
\begin{adjustbox}{max width=\linewidth}
\begin{tabular}{lrrrr}
\toprule
Setting (model / task / layer) & $\Delta_{\mathrm{sep}}$ & 95\% interval & $\Delta_{\mathrm{quad}}$ & 95\% interval\\
\midrule
0.5B / Retrieval / 18 & -0.001037 & $[-0.001148,-0.000935]$ & -0.000293 & $[-0.000352,-0.000241]$\\
0.5B / SST-2 / 18 & -0.003072 & $[-0.003433,-0.002720]$ & -0.000572 & $[-0.000657,-0.000487]$\\
1.5B / Retrieval / 14 & -0.003260 & $[-0.003859,-0.002716]$ & -0.001442 & $[-0.001800,-0.001126]$\\
1.5B / Retrieval / 21 & -0.100020 & $[-0.129003,-0.076014]$ & -0.079840 & $[-0.105230,-0.056382]$\\
1.5B / SST-2 / 14 & -0.002852 & $[-0.003714,-0.002079]$ & -0.001076 & $[-0.001371,-0.000791]$\\
1.5B / SST-2 / 21 & -0.105106 & $[-0.120056,-0.090357]$ & -0.076190 & $[-0.091271,-0.060908]$\\
\bottomrule
\end{tabular}
\end{adjustbox}
\end{table}

\begin{table}[htbp]
\centering\small
\caption{Prediction of the independently executed four-cell margin interaction
$I_m$. The exact and quadratic predictions are $\mathcal G(C_{KV})$ and
$\mathcal G(C_2)$. Correlations use signed effects and are descriptive; each row
contains 256 spans clustered within 32 prompt pairs. The final column supplies
the absolute scale of the executed interaction.}
\label{tab:jointfactorial}
\setlength{\tabcolsep}{5pt}
\begin{adjustbox}{max width=\linewidth}
\begin{tabular}{lrrrr}
\toprule
Setting (model / task / layer) & Exact MAE & Quadratic MAE & Exact $r$ & Mean $|I_m|$\\
\midrule
0.5B / Retrieval / 18 & $4.52\times10^{-6}$ & $2.96\times10^{-4}$ & 0.99999 & 0.001043\\
0.5B / SST-2 / 18 & $2.54\times10^{-5}$ & $5.95\times10^{-4}$ & 0.99998 & 0.003094\\
1.5B / Retrieval / 14 & $6.00\times10^{-5}$ & 0.001513 & 0.99979 & 0.003321\\
1.5B / Retrieval / 21 & 0.008826 & 0.092810 & 0.99698 & 0.115820\\
1.5B / SST-2 / 14 & $1.62\times10^{-4}$ & 0.001331 & 0.99837 & 0.003308\\
1.5B / SST-2 / 21 & 0.003902 & 0.076860 & 0.99857 & 0.110007\\
\bottomrule
\end{tabular}
\end{adjustbox}
\end{table}

\subsection{Complete local runtime comparison}
All candidates are batched, with no automatic chunk fallback or precision change.
Synthetic cases use native-style Q/K/V tensors and eight-token spans; the native
capture has eight seven-token spans. PyTorch benchmark timing uses warmup,
accelerator synchronisation, five randomised rounds, and a minimum run time of
0.12 seconds per measurement. Totals directly measure the preparation and
evaluation specified in Section~\ref{sec:jointruntime}; component medians need
not sum to total medians. All seven timing audits pass against dense and FP64
references.

\begin{table}[htbp]
\centering\small
\caption{Directly measured local evaluation time including preparation (ms).
$R=1$ evaluates one strength; $R=3$ shares preparation across three strengths.
Ratios are dense time divided by sparse time, so a ratio below one denotes a
sparse slowdown. Values are medians of five timing rounds on one A100;
component timings, interquartile ranges, and raw measurements are archived.}
\label{tab:jointruntimes}
\setlength{\tabcolsep}{4pt}
\begin{adjustbox}{max width=\linewidth}
\begin{tabular}{lrrrrrr}
\toprule
& \multicolumn{2}{c}{$R=1$ (ms)} & \multicolumn{2}{c}{$R=3$ (ms)} & \multicolumn{2}{c}{Dense / sparse}\\
\cmidrule(lr){2-3}\cmidrule(lr){4-5}\cmidrule(lr){6-7}
Case & Dense & Sparse & Dense & Sparse & $R=1$ & $R=3$\\
\midrule
Native: $N=112$, $C=8$ & 2.083 & 2.521 & 4.014 & 3.292 & 0.83 & 1.22\\
Synthetic: $N=128$, $C=8$ & 2.096 & 2.517 & 4.021 & 3.296 & 0.83 & 1.22\\
Synthetic: $N=128$, $C=256$ & 2.133 & 2.533 & 4.146 & 3.279 & 0.84 & 1.26\\
Synthetic: $N=1024$, $C=8$ & 2.087 & 2.516 & 4.024 & 3.279 & 0.83 & 1.23\\
Synthetic: $N=1024$, $C=256$ & 2.134 & 2.537 & 4.157 & 3.288 & 0.84 & 1.26\\
Synthetic: $N=8192$, $C=8$ & 2.068 & 2.498 & 3.979 & 3.256 & 0.83 & 1.22\\
Synthetic: $N=8192$, $C=256$ & 2.142 & 2.531 & 4.197 & 3.281 & 0.85 & 1.28\\
\bottomrule
\end{tabular}
\end{adjustbox}
\end{table}

\clearpage
\section{Complete frequency-band prediction means}
\label{app:bandstrata}
Table~\ref{tab:allbandstrata} reports the recorded band-by-shift strata for
all three tasks. Each row averages 3,840 executed interventions from 256
prompt sets, using layers 0, 14, and 21 and all five prompt variants.
The zero-predictor MAE gives the mean absolute measured margin change.

\begin{table}[htbp]
\centering\small
\caption{Downstream positional-prediction MAE by frequency band and displacement,
in logit-margin units. All rows use the same finite, Jacobian, and zero predictors.}
\label{tab:allbandstrata}
\setlength{\tabcolsep}{8pt}
\begin{adjustbox}{max width=\linewidth}
\begin{tabular}{llrrrr}
\toprule
Task & Band & Shift & Finite MAE & Jacobian MAE & Zero MAE\\
\midrule
Corrupt & Fast & 16 & 0.001026 & 0.066144 & 0.024164\\
Corrupt & Fast & 128 & 0.108527 & 0.567056 & 0.166353\\
Corrupt & Middle & 16 & 0.000064 & 0.000236 & 0.004576\\
Corrupt & Middle & 128 & 0.003543 & 0.017862 & 0.031212\\
Corrupt & Slow & 16 & 0.000006 & 0.000006 & 0.000119\\
Corrupt & Slow & 128 & 0.000006 & 0.000009 & 0.000953\\
Corrupt & All & 16 & 0.001391 & 0.066527 & 0.026921\\
Corrupt & All & 128 & 0.110114 & 0.564376 & 0.203602\\
\midrule
Ordered & Fast & 16 & 0.089449 & 0.174306 & 0.140993\\
Ordered & Fast & 128 & 0.131197 & 0.681604 & 0.177739\\
Ordered & Middle & 16 & 0.000057 & 0.000402 & 0.007160\\
Ordered & Middle & 128 & 0.002239 & 0.027925 & 0.038921\\
Ordered & Slow & 16 & 0.000004 & 0.000004 & 0.000222\\
Ordered & Slow & 128 & 0.000005 & 0.000014 & 0.001787\\
Ordered & All & 16 & 0.080270 & 0.164970 & 0.138236\\
Ordered & All & 128 & 0.159647 & 0.702497 & 0.220769\\
\midrule
SST-2 & Fast & 16 & 0.002061 & 0.056467 & 0.022879\\
SST-2 & Fast & 128 & 0.143155 & 0.557336 & 0.218531\\
SST-2 & Middle & 16 & 0.000040 & 0.000241 & 0.004042\\
SST-2 & Middle & 128 & 0.002112 & 0.016428 & 0.027545\\
SST-2 & Slow & 16 & 0.000006 & 0.000006 & 0.000169\\
SST-2 & Slow & 128 & 0.000006 & 0.000010 & 0.001353\\
SST-2 & All & 16 & 0.002239 & 0.056709 & 0.024750\\
SST-2 & All & 128 & 0.137185 & 0.546868 & 0.235175\\
\bottomrule
\end{tabular}
\end{adjustbox}
\end{table}

\clearpage
\section{Joint-KV protocol and reproduction}
\label{app:jointprotocol}
The initial joint experiment in Appendix~\ref{app:jointinitial} fixes one model,
one layer, one dataset family, and one primary comparison before execution. For pair index $u\in\{0,\ldots,31\}$,
NumPy's random generator is initialised with
\texttt{SeedSequence([20260908, u])}. It samples eight A/B labels, ensures both
labels occur, shuffles them, and selects the query entity. The donor rotates
the entity-identifier array by a sampled nonzero cyclic shift and flips each
row's label. The complete prompts and token spans are saved before evaluation.
Every record includes its terminating newline in the patched span; the query's
own cache entry remains in the attended bank.

\begin{table}[htbp]
\centering\small
\caption{The first paired prompt. Both endpoints use the instruction
\emph{Read the records. Answer the question with A or B only.} and finish with
\emph{Question: What is the label of Entity 06?} followed by \emph{Answer:}.
The native Qwen chat template wraps the instruction, records, and question.}
\label{tab:jointexample}
\begin{adjustbox}{max width=\linewidth}
\begin{tabular}{rll}
\toprule
Record position & Baseline & Donor\\
\midrule
1 & \texttt{Entity 01: A} & \texttt{Entity 04: B}\\
2 & \texttt{Entity 02: B} & \texttt{Entity 05: A}\\
3 & \texttt{Entity 03: A} & \texttt{Entity 06: B}\\
4 & \texttt{Entity 04: B} & \texttt{Entity 07: A}\\
5 & \texttt{Entity 05: B} & \texttt{Entity 08: A}\\
6 & \texttt{Entity 06: B} & \texttt{Entity 01: A}\\
7 & \texttt{Entity 07: B} & \texttt{Entity 02: A}\\
8 & \texttt{Entity 08: A} & \texttt{Entity 03: B}\\
\bottomrule
\end{tabular}
\end{adjustbox}
\end{table}

For each record position the donor supplies the native keys and values at the
same token indices. All predictors receive identical captures. The fixed-value
key correction evaluates the complete span jointly, retaining its common
softmax denominator. The value correction uses baseline attention weights;
the joint correction uses the edited attention weights. For multiple heads,
the output projection and head sum precede the downstream-gradient contraction.
The complete per-intervention table reports both the signed $C_{KV}$ correction
and the executed factorial effect, together with the residual
$I_m-\mathcal G(C_{KV})$.

The run uses Transformers 4.51.3, PyTorch 2.11.0+cu128, and an NVIDIA A100
SXM4 40GB. Native execution is FP32 with TF32 disabled; local algebra and
effective-matrix audits use FP64. The effective readout audit uses the
mean-key score-shift convention, $k_j-\overline k$, and includes each endpoint's
uniform value mean. The baseline statistics and positive complement mass
implement the sparse formula in \eqref{eq:sparsejoint}.

The standalone command is
\begin{verbatim}
python rope_joint_kv_experiment.py --device cuda \
    --batch 32 --out joint_kv_results
\end{verbatim}
The supplied \texttt{Joint\_KV\_Attribution\_Colab.ipynb} embeds the same source.
Native tiny-Qwen preflights check cache isolation, grouped-query attention,
batch parity, full-forward parity, and the baseline gradient before the
pretrained checkpoint is downloaded. Each completed prompt pair is committed
with file hashes and the protocol fingerprint, enabling interrupted runs to
resume from the next pair.

The primary interval resamples prompt pairs with replacement 5,000 times
using bootstrap seed 981. The quadratic contrast uses the same sampling
scheme. The release tables retain all 768 interventions, the 32 prompt-level
summaries, and the model/source manifest. The figure-generation script reads
these tables directly. Native tensor captures additionally retain Q/K/V,
the output projection, the common gradient, batched controls, and the executed
outputs so each predictor can be reconstructed without rerunning the model.
\end{document}